%% file: main.tex
\documentclass[10pt,twocolumn,letterpaper]{article}

\usepackage[pagenumbers]{cvpr} %

\input{preamble}

\definecolor{cvprblue}{rgb}{0.21,0.49,0.74}
\usepackage[pagebackref,breaklinks,colorlinks,allcolors=cvprblue]{hyperref}

\def\paperID{} %
\def\confName{CVPR}
\def\confYear{2026}

\title{Planning in 8 Tokens: A Compact Discrete Tokenizer for Latent World Model}

\author{
Dongwon Kim$^1$\quad
Gawon Seo$^2$\quad
Jinsung Lee$^2$\quad
Minsu Cho$^{2,3}$\quad
Suha Kwak$^2$\\[4pt]
$^1$KAIST\quad
$^2$POSTECH\quad
$^3$RLWRLD\\[2pt]
{\tt\small \url{https://kdwonn.github.io/CompACT}
}}

\begin{document}
\maketitle
\input{sec/0_abstract}    
\input{sec/1_intro}
\input{sec/2_related_work}

\input{sec/3_method}
\input{sec/4_experiment}
\input{sec/5_conclusion}

\clearpage
{
    \small
    \bibliographystyle{ieeenat_fullname}
    \bibliography{main}
}
\clearpage

\input{sec/X_suppl}

\end{document}

%% file: preamble.tex
\usepackage{xcolor}
\usepackage{soul}

\newcommand{\modelname}{CompACT\xspace}

\usepackage{duckuments}
\usepackage{url}            %
\usepackage{booktabs}       %
\usepackage{amsfonts}       %
\usepackage{nicefrac}       %
\usepackage{microtype}      %
\usepackage{xcolor}         %
\usepackage{bbding}
\usepackage{multirow}
\usepackage{algorithm}
\usepackage{algorithmic}
\usepackage{graphicx, amsmath, amssymb, caption, subcaption, overpic, textpos}
\usepackage{arydshln} %
\usepackage{listings}
\usepackage{setspace}
\usepackage{xspace}
\usepackage[table]{xcolor}

\usepackage{amsthm}
\newtheorem{definition}{Definition}
\newtheorem{proposition}{Proposition}

\input{math_command}

%% file: math_command.tex
\usepackage{amsmath,amsfonts,bm}

\def\eqref#1{equation~\ref{#1}}

\def\1{\bm{1}}

\def\va{{\bm{a}}}

\def\vo{{\bm{o}}}

\def\vz{{\bm{z}}}

\def\mA{{\bm{A}}}

\def\mO{{\bm{O}}}

\DeclareMathAlphabet{\mathsfit}{\encodingdefault}{\sfdefault}{m}{sl}
\SetMathAlphabet{\mathsfit}{bold}{\encodingdefault}{\sfdefault}{bx}{n}

\newcommand{\R}{\mathbb{R}}

\DeclareMathOperator*{\argmin}{arg\,min}

%% file: sec/0_abstract.tex
\begin{abstract}
World models provide a powerful framework for simulating environment dynamics conditioned on actions or instructions, enabling downstream tasks such as action planning or policy learning.
Recent approaches leverage world models as learned simulators, but its application to decision-time planning remains computationally prohibitive for real-time control.
A key bottleneck lies in latent representations: conventional tokenizers encode each observation into hundreds of tokens, making planning both slow and resource-intensive.
To address this, we propose \modelname, a discrete tokenizer that compresses each observation into as few as 8 tokens, drastically reducing computational cost while preserving essential information for planning.
An action-conditioned world model that occupies \modelname tokenizer achieves competitive planning performance with orders-of-magnitude faster planning, offering a practical step toward real-world deployment of world models.
\end{abstract}

%% file: sec/1_intro.tex
\section{Introduction}
\label{sec:intro}
Humans navigate the world not through pixel-perfect recall of their surroundings, but rather through compact mental representations that capture only the information necessary for decision-making~\cite{forrester1971counterintuitive,ha2018world}.
This internal model—an imprecise but efficient abstraction of reality—reduces the complexity of sensory input into a representation optimized for action and planning.
In the context of artificial intelligence and reinforcement learning (RL), this concept manifests as the \textit{world model}~\cite{ha2018world}, a neural network that captures environment dynamics to enable planning~\cite{micheli2022transformers,hansen2023td, zhou2024dino, bar2025navigation} and policy learning~\cite{hafner2020mastering, alonso2024diffusion, hafner2023mastering, hafner2019dream, yang2023learning}.

World models have emerged as a promising solution to the sample inefficiency of RL. Traditional model-free RL methods require millions of interactions with the environment to learn effective policies, making them impractical for real-world applications where data collection is expensive or risky. By learning to predict future states, world models enable agents to simulate experiences internally, reducing the need for real environment interactions. Furthermore, these models themselves can be used for planning without additional learning of policy~\cite{zhou2024dino,bar2025navigation} through model-predictive control (MPC)~\cite{williams2016aggressive,de2005cem}.

Recent advances in world modeling have been driven by the rapid progress in generative models, particularly in image and video generation~\cite{rombach2022high, esser2021taming, chang2022maskgit}.
These models can generate photorealistic images or videos conditioned on language instructions~\cite{yang2023learning,du2023video} or actions~\cite{bar2025navigation,zhou2024dino,alonso2024diffusion,yang2023learning,zhu2024irasim}, suggesting an implicit understanding of world's underlying dynamics.

{However, there exists a critical gap between these generative approaches and their application to planning. These models are designed for photorealistic image generation, requiring them to capture extensive perceptual detail such as textures, lighting, and shadows.}
This necessitates encoding single images into hundreds of latent tokens, which sharply increases computational cost.
Since most world models in the literature adopt attention-based architectures~\cite{peebles2023scalable}, this burden grows quadratically, making planning especially expensive.
As a result, current world models remain impractical for real-world control:
for example, 
the state-of-the-art 
navigation world models (NWM)~\cite{bar2025navigation} require up to 3 minutes of computation per episode for planning,\footnote{Measured using a single RTX 6000 ADA GPU.} making them unsuitable for applications demanding real-time responsiveness.
{This motivates us to explore an alternative design philosophy: \textit{what if we prioritize extreme compression over perfect reconstruction?} Rather than seeking the conventional path of increasing token count for higher fidelity representations, we hypothesize that aggressive compression might actually be beneficial—forcing the world model to learn more abstract, action-relevant representations rather than preserving every perceptual detail. To test this hypothesis, we push compression to its extreme limit and investigate whether such radical reduction can still support effective planning.
}

We propose \modelname , a compact tokenizer that encodes each image as few as 8 tokens—approximately 128 bits per image (8 tokens of 16 bits each). 
This represents an extreme compression ratio compared to existing approaches. For instance, the SD-VAE tokenizer~\cite{rombach2022high} used in NWM~\cite{bar2025navigation} requires 784 tokens to represent the same image. 
Beyond the reduction in token count, our tokenizer further distinguishes itself by employing a discrete latent space, 
enabling much faster future-state prediction: each token is unmasked only once~\cite{chang2022maskgit}, rather than being processed through hundreds of iterative denoising steps typically required in diffusion models utilizing continuous latent space~\cite{ho2020denoising}.
By training world models in this compact latent space, we can achieve order-of-magnitude reductions in rollout latency.

Compressing each image to just 128 bits  creates an irreducible information bottleneck—the question is not whether to lose information, but which information to preserve. 
Planning requires low-frequency features such as high-level semantics and spatial relationships, rather than high-frequency perceptual details like textures and lighting. 
Our approach separates 
these two aspects: only planning-critical semantics are \textit{preserved} in compact tokens, while perceptual details are \textit{synthesized} when pixel-level outputs are needed during decoding. 

The key design choice enabling such selective preservation of semantic information is our use of a \textit{frozen} pretrained vision encoder~\cite{simeoni2025dinov3} as the foundation of our tokenizer. 
Conventional tokenizers train encoders end-to-end for reconstruction, prioritizing perceptual fidelity. 
In contrast, we leverage the rich semantic representations already captured by vision foundation models. 
Our compact latent tokens act as learnable queries that attend to these frozen representations via a cross-attention-based resampling module. 
Crucially, because vision foundation models already abstract away low-level reconstruction details—focusing instead on semantic understanding—our resampling process can only distill planning-critical semantic information. 
This design inherently ensures the tokenizer preserves object-level semantics and spatial relationships over photorealistic details.

Complementing this semantic encoding strategy, our second key contribution is a generative decoding approach: rather than attempting direct pixel reconstruction from 16 or 8 tokens, our decoder learns to unmask a latent representation capturing perceptual details from a pretrained target tokenizer that uses hundreds of tokens per image (specifically, the VQGAN tokenizer from MaskGIT~\cite{chang2022maskgit}), using our compact tokens as conditioning. 
While our compact latent tokens capture only high-level semantic features, the generative decoding process synthesizes fine-grained details that are consistent with these semantics.
This formulation transforms an intractable decompression problem into a tractable conditional generation task. 

To validate the effectiveness of the proposed approach, we train action-conditioned world models on the latent space of \modelname for both navigation and robot manipulation tasks.
Such action-conditioned world models have a unique strength in that they can serve as general-purpose planners via MPC, but the prohibitive computational burden required for rollouts has remained as a bottleneck.
On navigation planning in RECON~\cite{shah2021rapid}, an action-conditioned world model trained with \modelname achieves comparable accuracy to one using 784 continuous tokens while delivering approximately 40× speedup in planning latency. 
Furthermore, our 8-token model outperforms previous tokenizer with 64 tokens, validating that carefully designed extreme compression can yield both computational efficiency and superior planning performance.
To further validate the efficacy of compact latent tokens learned by \modelname, we conduct action-conditioned video prediction experiments on RoboNet~\cite{dasari2019robonet}. 
On RoboNet, \modelname latent tokens enable accurate action regression comparable to previous tokenizers using 16× more tokens, and maintain strong action consistency in generated videos, confirming that the learned representations preserve action-relevant information critical for accurate planning.

\begin{figure*}[h!]
    \centering
    \includegraphics[width=\linewidth]{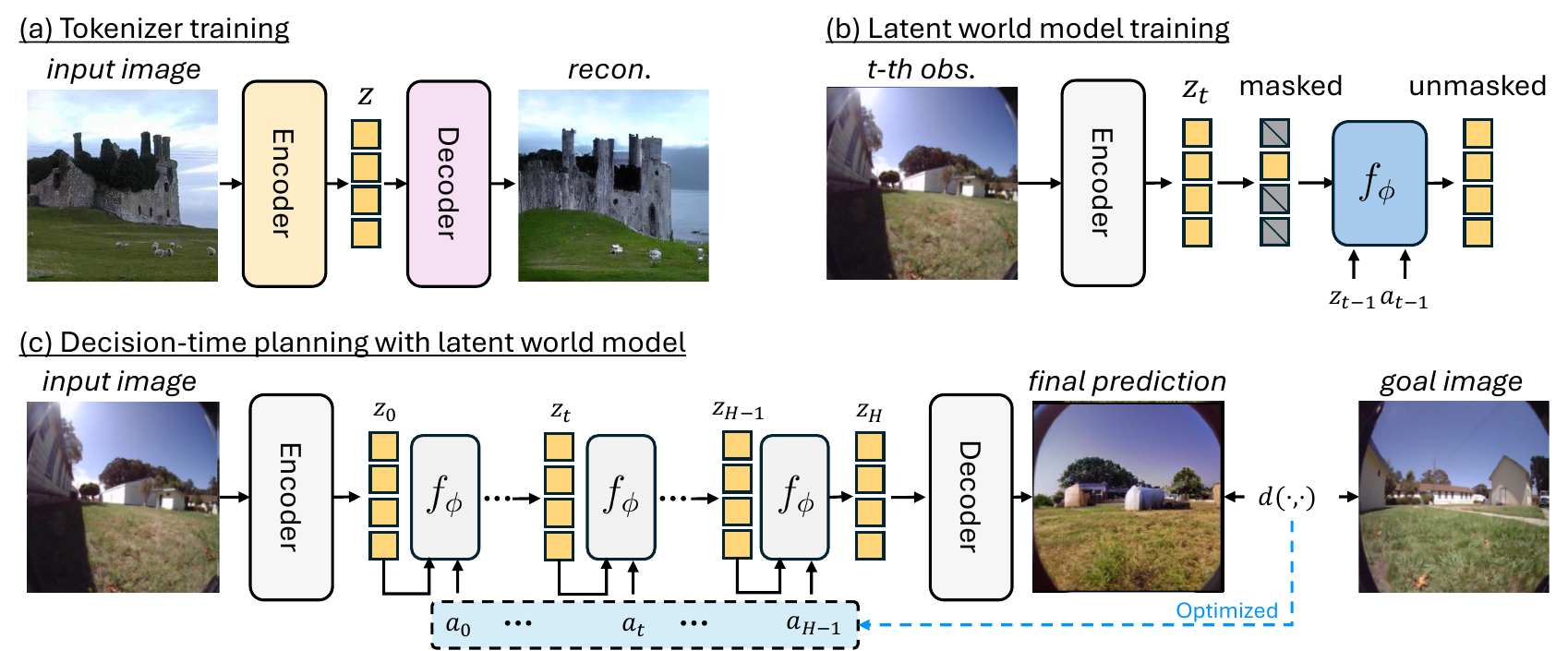}
    \caption{
\textbf{Overview of the proposed latent world model formulation (Sec.~\ref{sec:formulation}).}
(a) An image tokenizer is first trained with a reconstruction objective to map an input image into compact latent tokens $\vz$. (Fig.~\ref{fig:tok_detail} and Sec.~\ref{sec:compacttok}). 
(b) Using the learned tokenizer, latent world model $f_{\phi}(\vz_t, \va_t)$ is trained to model the conditional distribution of the future state $p_{\phi}(\vz_{t+1} | \vz_{t}, \va_{t})$, where we adopt masked generative modeling (Sec.~\ref{sec:compact_lwm}).
(c) At test time, the learned latent world model is used for \textit{decision-time planning}: An optimization procedure (e.g., MPC with CEM) searches over actions $\va_{0:H-1}$ to minimize the distance between the predicted final state and a goal image. 
}
    \label{fig:overall}
\end{figure*}

%% file: sec/2_related_work.tex
\section{Related Work}
\label{sec:related}

\subsection{Image tokenization}
Image tokenization has played a crucial role in visual generation by alleviating the difficulty of directly modeling distributions in high-dimensional space~\cite{van2017neural,esser2021taming,rombach2022high,yu2021vector,lee2022autoregressive,mentzer2023finite}.

Conventional image tokenization approaches relies on 2D patch-grid latent representations, which fixes the number of tokens according to the input resolution and prevents further token compaction.
To overcome this, recent works have explored 1D tokenization~\cite{yu2024image,bachmann2025flextok,miwa2025one,kim2025democratizing}, which does not explicitly preserve spatial structure.
Specifically, FlexTok~\cite{bachmann2025flextok} allows flexible token lengths (1--256) where later tokens capture progressively finer details.
However, these tokenizers are designed for photorealistic generation and prioritize high-frequency details and high-fidelity reconstruction, which are considered irrelevant to the decision making and planning. 

Recent tokenizers that leverage pretrained vision foundation models as encoders~\cite{zheng2025diffusion,gao2025one}, share architectural similarities with the proposed approach. However, their use of foundation model features is motivated by improving the tractability of downstream generative modeling, rather than achieving extreme compression as in \modelname.

Several recent world models~\cite{hafner2020mastering,hafner2023mastering,micheli2022transformers,
micheli2024efficient,scannell2025discrete,wu2024ivideogpt} relate to the proposed approach through their use of discrete latent representations.
Most related to ours, \cite{micheli2024efficient,wu2024ivideogpt} reduce per-frame token count by conditioning on previous frames, but this limits applicability to long-horizon planning or scenarios with significant viewpoint changes.
\modelname instead achieves compact tokenization unconditionally, by learning to 
retain only planning-critical information.

\subsection{Masked generative model}
Masked image generative models~\cite{chang2022maskgit,chang2023muse,gao2023mdtv2,li2023mage,fan2024fluid,li2024autoregressive,weber2024maskbit} leverage bidirectional attention mechanisms to reconstruct masked tokens during generation. Unlike traditional autoregressive models~\cite{chen2020generative,tang2024hart} predicting tokens one-by-one, these architectures~\cite{yu2023language,zheng2022movq} can sample multiple tokens within a single step, thereby reducing the number of steps needed for full image generation. Notably, MaskGIT~\cite{chang2022maskgit} and MAR~\cite{li2024autoregressive} have demonstrated that such designs enable both rapid and high-quality image synthesis. 
In this work, we focus on the tokenization stage and adopt the widely used non-autoregressive sampling approach from MaskGIT~\cite{chang2022maskgit} for generating token sequences.

\subsection{Planning via World Models}
World models \cite{ha2018world} serve as internal representations that encode environmental dynamics, enabling agents to mentally simulate future states before acting. By predicting future observations from current states and actions, these models facilitate planning across diverse domains including robotics \cite{yang2023learning,mendonca2023structured}, autonomous driving \cite{hu2023gaia,gao2024vista,zhao2025drivedreamer}, gaming \cite{bruce2024genie,alonso2024diffusion,valevski2024diffusion}, and navigation \cite{bar2025navigation,koh2021pathdreamer,nie2025wmnav,yao2025navmorph}. 

Existing approaches can be broadly categorized into two paradigms based on their planning mechanisms. 
One line of approaches employs decision-time planning with world models through test-time optimization, where methods like TDMPC2 \cite{hansen2023td}, DINO-WM~\cite{zhou2024dino}, and NWM \cite{bar2025navigation} iteratively refine action sequences toward specified goals using action-conditioned world model. 
Another line of approaches adopts hierarchical planning through subgoal generation, where sparse intermediate visual states are first generated to bridge current observations to goals, followed by Inverse Dynamics Models to extract executable actions. 
UniPi \cite{du2023learning} exemplifies this strategy through conditional video generation guided by textual goals, and AVDC \cite{ko2023learning} uses language-conditioned prediction with optical flow for action estimation. 

These existing approaches face significant computational challenges in real-time settings, especially with large models like diffusion-based video generation models \cite{du2023learning,hu2023gaia,yang2023learning,bruce2024genie}. 
In this work, we aim to build a world model within the extremely compact latent space, enabling more efficient planning and control. 
To validate the generality of our approach, we evaluate it across both paradigms: goal-conditioned visual navigation, corresponding to decision-time planning with world models, and action-conditioned video prediction, corresponding to hierarchical planning through subgoal generation.

%% file: sec/3_method.tex
\section{Method}

\subsection{Latent generative model as world model}
\label{sec:formulation}
In this section, we first describe how a world model can be formulated as a latent generative model. The overall formulation is depicted in Fig.~\ref{fig:overall}.
We consider the standard world model setting where the objective is to predict future observations given current state and action.
Formally, we denote observations (e.g., video frames) as $\mO = [\vo_0, \vo_1, \dots, \vo_T] \in \mathbb{R}^{T \times H \times W \times 3}$ and actions as $\mA = [\va_0, \va_1, \dots, \va_T] \in \mathbb{R}^{T \times 3}$.\footnote{In navigation settings, actions are 3-dimensional, representing changes in $x$-axis, $y$-axis, and yaw. The formulation generalizes to different action dimensions (e.g., 5-dimensional actions of a robot arm).}
The world model $f_\mathtt{\theta}: \mathbb{R}^{H \times W \times 3} \times \mathbb{R}^{3} \rightarrow \mathcal{P}(\mathbb{R}^{H\times W \times 3})$ can be formulated as:
\begin{align}
\begin{split}
f_{\theta}: (\vo_t, \va_t) &\mapsto p_{\theta}(\vo_{t+1} | \vo_{t}, \va_{t}). 
\end{split}
\label{eq:world_obs}
\end{align}
For simplicity, we omit the temporal context window in our notation; in practice, the model conditions on a history of $\tau$ observations and actions.

Because real-world dynamics are inherently uncertain and only partially observable, a world model should produce a stochastic distribution over future states rather than a deterministic prediction.
Such a stochastic formulation of the world model can be naturally implemented using generative modeling, where the generator is conditioned on past observations 
$\vo_{t}$ and action $\va_t$.
Direct generative modeling in pixel space is computationally prohibitive due to 
the high dimensionality of visual observations.
Instead, {the world model $f_\theta$} can be formulated to operate on low-dimensional latent tokens $\vz \in \mathbb{R}^{N \times D}$~\cite{bar2025navigation}.
These latent tokens are obtained via an image tokenizer comprising an encoder $\mathcal{E}: \mathbb{R}^{H \times W \times 3} \rightarrow \mathbb{R}^{N \times D}$ and decoder $\mathcal{D}: \mathbb{R}^{N \times D} \rightarrow \mathbb{R}^{H \times W \times 3}$, trained with a reconstruction objective: $\mathcal{L}_\textrm{recon} = ||\vo - \mathcal{D}(\mathcal{E}(\vo))||^2_2$ (Fig.~\ref{fig:overall}(a)).
Extending Eq.~(\ref{eq:world_obs}), a latent world model $f_{\phi}: \mathbb{R}^{N \times D} \times \mathbb{R}^{3} \rightarrow \mathcal{P}(\mathbb{R}^{N \times D})$ can be described as 
\begin{align}
\begin{split}
f_{\phi}: (\vz_t, \va_t) \mapsto p_{\phi}&(\vz_{t+1} | \vz_{t}, \va_{t}),
\end{split}
\label{eq:world}
\end{align}
{where $\vz_{t} = \mathcal{E}(\vo_{t})$ .
Here, the token count $N$ directly determines computational complexity: for attention-based architectures~\cite{peebles2023scalable} commonly used in generative models, cost scales quadratically with $N$.
By keeping $N$ small, the latent world model formulation alleviates this quadratic bottleneck and enables efficient decision-time planning.
}

{Once the latent world model $f_\theta$ is trained, we can use it to find a sequence of actions $\{\va_t\}$ that drives the transition from the initial observation $\vo_0$ to the goal observation $\vo_\textrm{goal}$, as illustrated in Fig.~\ref{fig:overall}(c).
We first compute $\vz_0 = \mathcal{E}(\vo_0)$, and initialize a candidate action sequence $\mathbf{a} = [\va_0, \va_{1}, \ldots, \va_{H-1}]$.
Then, we obtain a sequence of latent tokens $\{\vz_t\}$ by rolling out the trained world model to predict future states over the planning horizon $H$:
}
\begin{align}
\vz_{t+1} \sim f_\phi(\vz_t, \va_t), ~t \in \{0, \cdots, H-1\}.
\end{align}
{
After the rollout reaches the planning horizon (i.e., $\vz_H$ is sampled), the candidate action sequence $\mathbf{a}$ is evaluated using a cost function that measures the distance between the final predicted observation and the goal: 
{$C(\mathbf{a}) = d(\hat{\vo}_{H}, {\vo}_\textrm{goal})$}, where $\hat{\vo}_H = \mathcal{D}(\vz_H)$, $\hat{\vo}_\textrm{goal} = \mathcal{D}(\vz_\textrm{goal})$, and $d(\cdot, \cdot)$ is a distance measure (e.g., LPIPS~\cite{johnson2016perceptual}).\footnote{We can also define the cost function as $d(\vz_{H}, \vz_\textrm{goal})$ in the latent space, which enables the faster planning since we can skip the decoding (Tab.~\ref{tab:ctx_masking_cost}).}
The optimal action sequence is then obtained via solving $\mathbf{a}^* = \argmin_{\mathbf{a}} C(\mathbf{a})$, where the optimization can be performed using sampling-based methods~\cite{de2005cem,chua2018cem,williams2016aggressive} or gradient descent.
}

\begin{figure}[t!]
    \centering
    \includegraphics[width=\columnwidth]{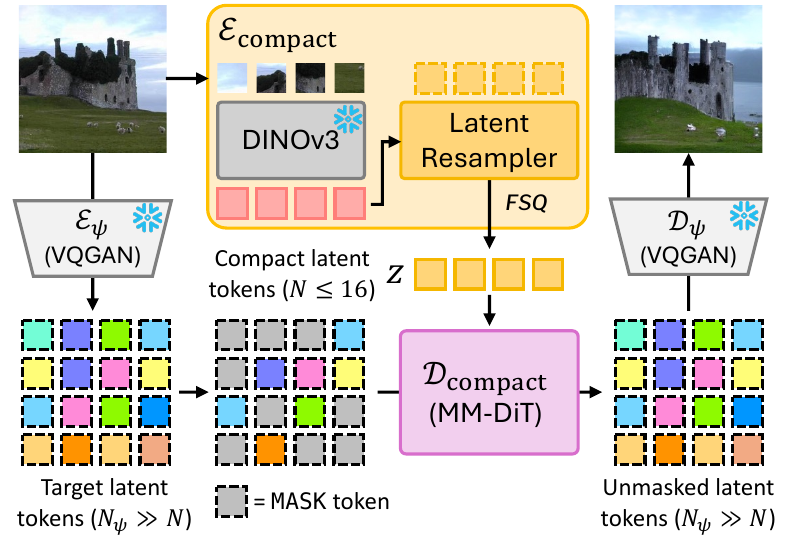}
    \vspace{-7mm}
    \caption{\textbf{A tokenizer architecture detail.} During training, only the latent resampler and $\mathcal{D}_\textrm{compact}$ are updated. $\mathcal{E}_{\psi}$ produces masked target tokens (training only), while $\mathcal{D}_{\psi}$ is used only during inference for pixel level reconstruction.}
    \label{fig:tok_detail}
\end{figure}

\subsection{\modelname tokenizer}
\label{sec:compacttok}
The computation bottleneck in world model planning stems from the latent token count $N$: conventional tokenizers typically encode images with hundreds of tokens, which slows down their sampling during autoregressive rollout. 
We introduce \modelname, a compact tokenizer $\mathcal{D}_\textrm{compact} \circ \mathcal{E}_\textrm{compact}$ that encodes each image into just 16 or 8 discrete tokens and avoids iterative denoising by using a discrete latent space (Fig.~\ref{fig:tok_detail}).
Despite this extreme compression, \modelname still preserves the sufficient information for 
planning (Sec.~\ref{sec:experiment}).

\subsubsection{Semantic encoding via frozen features}
\label{sec:comppacttok_enc}
The key design principle of our tokenizer is to preserve only planning-critical semantic information while discarding reconstruction-oriented high-frequency features. To achieve this, we build our encoder $\mathcal{E}_\textrm{compact}$ around a \textit{frozen} pretrained vision encoder—specifically, DINOv3~\cite{simeoni2025dinov3}—which already abstracts away low-level visual details in favor of semantic understanding. 
The encoder $\mathcal{E}_\textrm{compact}: \mathbb{R}^{H \times W \times 3} \rightarrow \{1, \ldots, K\}^{N}$ maps an input image $\vo$ into a sequence of $N(N \le 16)$ discrete tokens $\vz$, each selected from a vocabulary of size $K$. 
The encoder architecture consists of three components: (1) a frozen DINOv3 model that extracts semantic patch representations, (2) a latent resampler with learnable query tokens, and (3) a finite scalar quantization layer~\cite{mentzer2023finite}. 

Specifically, the input image is patchified and encoded by the frozen DINOv3 model to obtain semantic representations. 
The initial latent tokens $\vz^0 \in \R^{N \times D}$ then act as learnable queries in a transformer decoder-based latent resampler ~\cite{vaswani2017attention}. 
In each decoder block, these latent tokens attend to the DINOv3 output patch tokens via cross-attention layers, effectively distilling high-level semantic cues from the pretrained representations. 
Because the vision foundation model has already abstracted away textures, lighting, and other low-level details, the cross-attention mechanism can selectively focus on semantic information—object identities, spatial layouts, and scene structure—that remains in the frozen features. 
The output of the latent resampler is then discretized using finite scalar quantization, yielding discrete latent tokens $\vz \in {1, \ldots, K}^{N}$. While such extreme compression inevitably discards fine-grained visual details, we hypothesize that these details are largely irrelevant for planning tasks, where object-level semantics and spatial relationships dominate decision-making.

\subsubsection{Generative decoding}
\label{sec:comppacttok_dec}
Direct pixel reconstruction from $N \le 16$ tokens is an ill-posed problem—the information bottleneck prevents the deterministic recovery of perceptual details, since diverse pixel-space manifestations can arise from identical semantic features.
To address this, we propose a generative decoding strategy that introduces an intermediate representation.
Our decoder $\mathcal{D}_\textrm{compact}: \{1, \ldots, K\}^{N} \rightarrow \{1, \ldots, K_{\psi}\}^{N_{\psi}}$
learns to generate latent tokens from a pretrained tokenizer $\mathcal{D}_{\psi} \circ \mathcal{E}_{\psi}$
~\cite{chang2022maskgit}, using our compact tokens $\vz$ as a condition. 
We refer to this pretrained tokenizer as the \textit{target tokenizer} because its tokens serve as intermediate targets that bridge our semantic representation to pixel space. 
Specifically, we employ the VQGAN from MaskGIT~\cite{chang2022maskgit}, which encodes images into hundreds of tokens ($N_{\psi} \gg N$, typically $N_{\psi} = 196$ for $224 \times 224$ images) capturing perceptual details omitted in our compact tokens.
This transforms the intractable decompression problem into a conditional generation task.

Specifically, we first convert an image $\vo$ into target tokens $\vz^{\psi} = \mathcal{E}_{\psi}(\vo) \in \{1, \ldots K_\psi\}^{N_{\psi}}$ using the pretrained tokenizer encoder, where $N_{\psi} \gg N$ (typically $N_\psi = 196$ for 224 $\times$ 224 images).
We then employ masked generative modeling~\cite{chang2022maskgit,yu2023language} to learn the mapping from $\vz$ to $\vz^{\psi}$, which offers significantly faster sampling than autoregressive models~\cite{gpt3,sun2024autoregressive}.
{During training, a random subset of the target tokens $\vz^{\psi}$ is masked, and the decoder learns to recover them using the compact tokens $\vz$ and the remaining unmasked tokens.}
The tokenizer training objective is defined to minimize the negative log-likelihood of the masked tokens $\vz^\psi$:
\begin{align}
\mathcal{L}_\textrm{tok} = - \underset{{z^\psi}}{\mathbb{E}} \big[ \log p(\vz^\psi | \vz, M(\vz^{\psi})) \big],
\end{align}
where $M(\cdot)$ represents the random masking.
\modelname is trained solely with the unmasking objective in latent space without pixel-level reconstruction, where the weights of the target tokenizers are not updated.
{During inference, $\mathcal{D}_\textrm{compact}$ begins with a fully masked sequence and iteratively unmasks them following the sampling scheme based on its prediction confidence~\cite{chang2022maskgit}.}
The compact tokens $\vz$ provide high-level semantic guidance throughout this process, while the generative model synthesizes plausible visual details consistent with these semantics.
The final reconstruction is obtained through the target decoder: $\hat{\vo} = (\mathcal{D}_\psi \circ \mathcal{D}_\textrm{compact} \circ \mathcal{E}_\textrm{compact}) (\vo)$.

In a nutshell, our \modelname tokenizer achieves extreme compression by preserving only high-level semantics in $N(N \le 16)$ discrete tokens, then using these as conditioning for a generative decoder that synthesizes plausible high-frequency details. 
This design aligns with our core hypothesis that effective planning requires not photorealistic world models, but compact representations of decision-critical information. 

\subsection{World model in \modelname latent space}
\label{sec:compact_lwm}
With our \modelname tokenizer defined, we can now train the world model formulated in Eq.~(\ref{eq:world}) directly in the $N$-token discrete latent space ($N \le 16$), as shown in Fig.~\ref{fig:overall}(b).
Given a dataset of observations and action sequences, we first encode all observations into compact latent tokens using \modelname tokenizer: $\vz_t = \mathcal{E}_\textrm{compact}(\vo_t)$. 
Similar to generative decoding, we use the masked generative modeling
to train the world model $f_{\phi}$.
The training objective is given by
\begin{align}
\mathcal{L}_\textrm{world} = - \underset{{z_t, a_t, z_{t+1}}}{\mathbb{E}} \big[ \log p(\vz_{t+1} | \vz_t, \va_t, M(\vz_{t+1})) \big].
\end{align}
The key advantage of this formulation is computational efficiency during planning. During model-predictive control, it can now perform rollouts using only $N(N \le 16)$ tokens per timestep, enabling planning latency that was previously intractable with hundred-length tokens.

Since the specific choice of world model architecture is orthogonal to our tokenizer design, any model capable of modeling discrete sequence distributions can be employed.
We explore two frameworks for learning the conditional distribution $p(\vz_{t+1} | \vz_t, \va_t)$. 
For navigation tasks, we follow an autoregressive framework following NWM~\cite{bar2025navigation}: at each step, the model predicts $\vz_{t+1}$ conditioned on a fixed-length history window of latents $\{\vz_{t-\tau}, \ldots, \vz_t\}$ and actions $\{\va_{t-\tau}, \ldots, \va_t\}$, implemented using a DiT-based architecture~\cite{peebles2023scalable}. 
To improve action conditioning, we randomly mask latent tokens in the history window during training.
For robotic manipulation on RoboNet~\cite{dasari2019robonet}, we employ a block-causal transformer that models multiple future frames simultaneously, predicting $\{\vz_{t+1}, \ldots, \vz_{t+K}\}$ in parallel while maintaining causal dependencies between frames. 

Both training schemes can be understood as discrete variants of the diffusion forcing~\cite{chen2024diffusion}: the navigation model learns to condition on partially masked context, while the parallel generation naturally implements diffusion forcing as frames at different unmasking stages provide varying levels of noisy conditioning.
This robust training improves planning accuracy without additional cost (ablation in Table~\ref{tab:ctx_masking_cost}; see the supplement for implementation details).

%% file: sec/4_experiment.tex
\section{Experiment}
\label{sec:experiment}

\subsection{Experimental Settings}
We evaluate \modelname across two key aspects: (1) \textbf{tokenization quality }through reconstruction metrics, and (2) \textbf{planning effectiveness} through action-conditioned world models in navigation and manipulation tasks. This dual evaluation validates our hypothesis that extreme compression preserves planning-critical information while enabling efficient decision-time planning.

\noindent \textbf{Task conductive.}
We evaluate \modelname on the following tasks:
(1) Image reconstruction: Reconstructing original images from compressed latent tokens.
(2) Goal-conditioned visual navigation: Given a context image and a navigation goal image, planning optimal paths by predicting future observations conditioned on actions.
(3) Action-conditioned video prediction: Generating future video frames conditioned on current visual input and actions.

\noindent \textbf{Dataset.}
We train \modelname on ImageNet-1K~\cite{deng2009imagenet}.
World models are trained on three navigation datasets following NWM~\cite{bar2025navigation}: RECON~\cite{shah2021rapid}, SCAND~\cite{karnan2022socially}, and HuRoN~\cite{hirose2023sacson}. For manipulation, we use RoboNet~\cite{dasari2019robonet}, which contains diverse robot interaction data across multiple environments.

\noindent \textbf{Tokenizer baselines.}
We compare \modelname against two baseline tokenizers: 
(1) {SD-VAE}~\cite{rombach2022high}: A continuous latent space tokenizer requiring 784 tokens per image ($28\times28$ spatial grid), representing the conventional approach used in state-of-the-art world models like NWM~\cite{bar2025navigation}. This establishes the computational cost we aim to reduce.
(2) {FlexTok}~\cite{bachmann2025flextok}: A recent discrete tokenizer supporting flexible token lengths (1-256 tokens). We evaluate it at 16 and 64 tokens to directly compare against \modelname's compression levels. 

\noindent \textbf{Evaluation.}
We employ task-appropriate metrics:
\begin{itemize}[leftmargin=*]
    \item \textbf{Reconstruction quality}: reconstruction FID (rFID)~\cite{parmar2022aliased} and Inception Score (IS)~\cite{salimans2016improved} on ImageNet validation set.
    \item \textbf{Planning accuracy}: Absolute Trajectory Error (ATE) and Relative Pose Error (RPE) for navigation tasks, measuring how closely planned trajectories match ground truth.
    \item \textbf{Action-relevancy}: Inverse Dynamics Model (IDM) performance using L1 error and coefficient of determination on predicted end-effector positions, validating that compact tokens preserve action-critical information.
    \item \textbf{Action-conditioned video prediction}: Action Prediction Error (APE), measured as the L1 error between the conditioning action and the action predicted by IDM from generated frames. This evaluates whether generated videos accurately reflect the dynamics induced by the actions.
    \item \textbf{Computational efficiency}: Planning latency during model-predictive control.
\end{itemize}

\noindent \textbf{Implementation details.} We use DINOv3-B~\cite{simeoni2025dinov3} for pretrained vision encoder in $\mathcal{E}_\textrm{compact}$. For the generative decoder, we use VQGAN from MaskGIT~\cite{chang2022maskgit} as the target tokenizer $\mathcal{D}_\psi \circ \mathcal{E}_\psi$. For detailed hyperparameters for training and model architecture, we refer readers to the supplement.

\subsection{Tokenizer evaluation and ablations}
\input{table/table_recon}
\noindent \textbf{Reconstruction performance.}
Table~\ref{tab:vs_2d_tok} presents the reconstruction quality of each tokenizer measured by rFID and IS. 
MaskGIT-VQGAN~\cite{chang2022maskgit}, which is used as a target tokenizer $\mathcal{D}_{\psi} \circ \mathcal{E}_{\psi}$ in \modelname, serves as a baseline for reconstruction fidelity since \modelname relies on $\mathcal{D}_{\psi}$ for final pixel reconstruction.
The results demonstrate that \modelname can attain reconstruction performance comparable to recent state-of-the-art tokenizers while achieving extreme compression rate.
Interestingly, \modelname outperforms MaskGIT-VQGAN~\cite{chang2022maskgit} in IS, suggesting that our semantic encoder better preserves perceptually-relevant features that may be overlooked by VQGAN's pixel-focused reconstruction objective.

\input{table/table_encoder_ablation}

\begin{figure}[t!]
    \centering
    \includegraphics[width=0.48\textwidth]{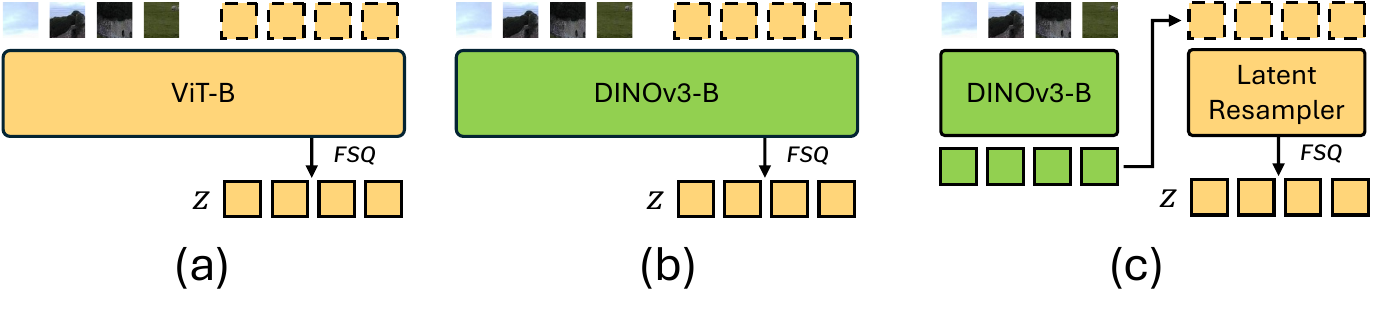}
    \vspace{-8mm}
    \caption{\textbf{\modelname encoder $\mathcal{E}_\textrm{compact}$ architecture variation.} (a) ViT (scratch)+ [\texttt{REG}]: Initial latent tokens are concatenated to the input patch tokens. This design follows previous transformer-based image tokenizers~\cite{yu2024image,bachmann2025flextok,yu2021vector}. (b) DINOv3~\cite{simeoni2025dinov3} + [\texttt{REG}]: Similar to (a), but encoder is initialized with Dinov3. (c) DINOv3~\cite{simeoni2025dinov3} + latent resampler: latent resampler and Dinov3 initialized encoder. Dino and ViT are updated during training in these variants.}
    \label{fig:ablation}
\end{figure}

\noindent \textbf{Ablation of encoder design choices.}
We report ablation studies on our final \modelname tokenizer design choices in Table~\ref{tab:encoder_ablation}. 
We compare three encoder architecture variants illustrated in Fig.~\ref{fig:ablation}.  
The results demonstrate that leveraging frozen features from the semantic encoder is a key design choice enabling extremely compact tokenization while maintaining strong reconstruction quality. Notably, full finetuning of DINOv3-B results in significantly worse rFID of 5.22. 
We conjecture that finetuning the vision foundation model shifts its representations toward reconstruction-oriented features, causing the compact latent tokens to lose the high-level semantic information that is crucial for our generative decoding approach. Without rich semantic conditioning, the generative decoder cannot synthesize plausible fine-grained details, leading to degraded reconstruction quality. 

\noindent \textbf{Ablation of generative decoding.}
Table~\ref{tab:encoder_ablation} also shows the ablation studies for the generative decoding. 
To check the effectiveness of the generative decoding, we replaced the $\mathcal{D}_\text{compact}$ with the single-step feedforward decoder, which leads to the severe degradation in reconstruction quality. This results highlights the $\mathcal{D}_\text{compact}$'s crucial role in synthesizing fine-grained features, as compact latent token only preserves high-level semantic features.

\subsection{Characterizing \modelname latent tokens}

\begin{figure}[t!]
    \centering
    \includegraphics[width=0.48\textwidth]{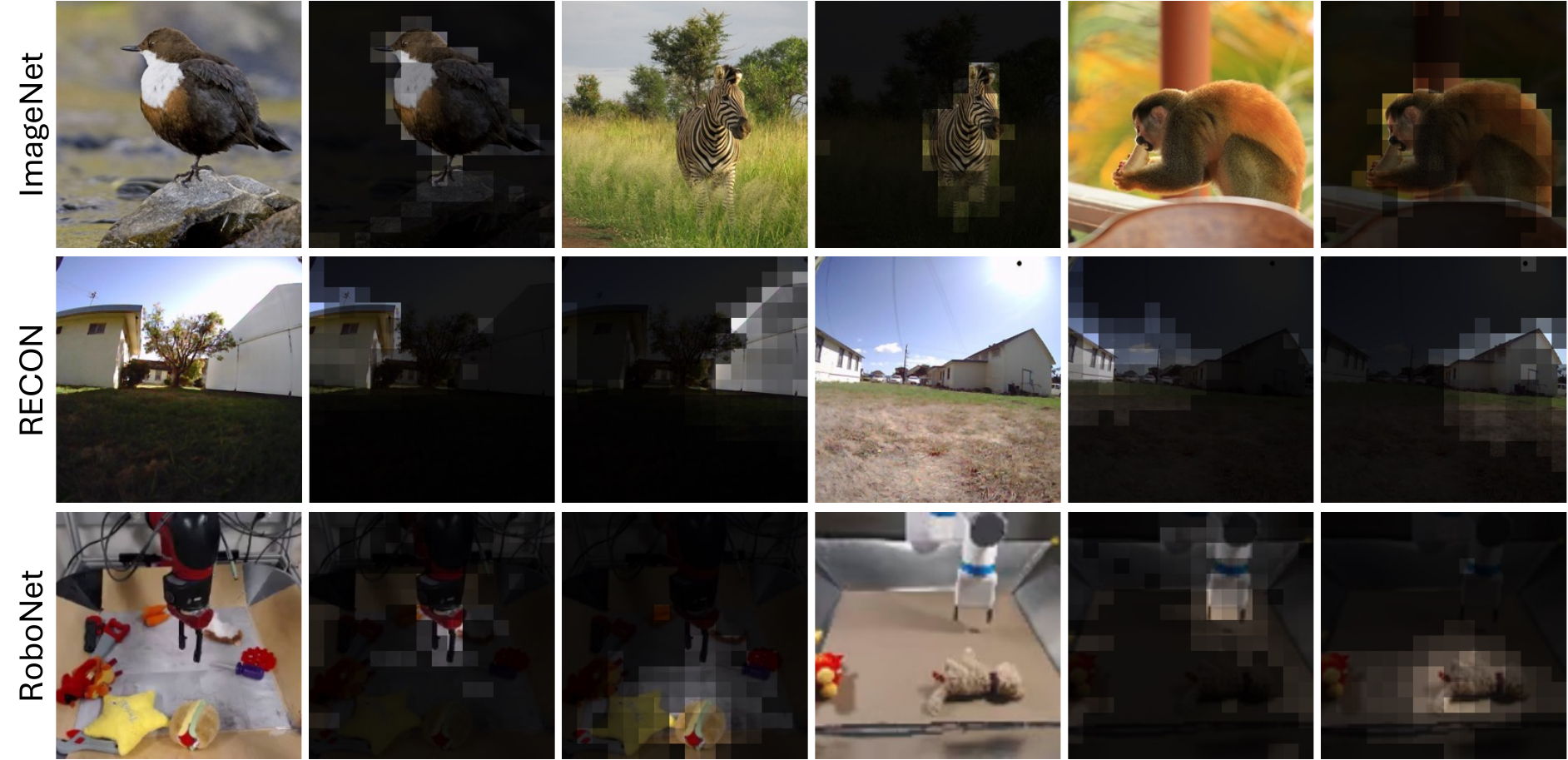}
    \vspace{-6mm}
    \caption{\textbf{Attention visualization for compact latent token in latent resampler.} Brighter the color, higher the attention score.}
    \label{fig:qual_atten}
\end{figure}

\noindent \textbf{\modelname tokens capture modular scene elements.}
Fig.~\ref{fig:qual_atten} visualizes the attention maps from the latent resampler across ImageNet, RECON, and RoboNet. 
We observe that each compact latent token attends to coherent, semantically meaningful regions within the image, effectively capturing modular object-level elements.
This compositional structure emerges naturally from frozen DINO features used for latent resampling, as they already encode object-centric representations.
Specifically, in ImageNet, tokens attend to distinct objects (animals); in RECON, they focus on structural elements like buildings; in RoboNet, they isolate the end-effector and manipulation targets.
Rather than distributing information uniformly across tokens, \modelname learns to allocate each token to semantically coherent scene components without explicit supervision.

\input{table/table_idm}
\noindent \textbf{Modular latents benefit planning.}
We now validate whether this modular structure translates to improved planning performance.
Since planning requires understanding how actions induce state transitions, we use an Inverse Dynamics Model (IDM) as a proxy to evaluate whether our compact tokens preserve dynamics-relevant information: if we can train an IDM that accurately predicts actions from consecutive frames on latent representations, it indicates that the latents capture the essential state changes for control tasks.
For details of IDM implementation, please refer to the supplementary material.
As shown in Table~\ref{tab:idm}, the IDM trained on \modelname latents achieves superior performance compared to MaskGIT-VQGAN despite using 16× fewer tokens. 
For this experiment, we finetune \modelname (pretrained on ImageNet) on RoboNet to adapt the representations to the manipulation domain.
This performance gap reveals a fundamental difference in token allocation: \modelname's modular tokens naturally capture dynamic objects—the end-effector and manipulation targets—whose state transitions encode action information. 
By allocating tokens to semantically coherent objects rather than fixed spatial regions, our compact representation better captures the state changes induced by actions, benefiting downstream planning tasks.

\input{table/table_planning_performance}
\begin{figure}[t!]
    \centering
    \includegraphics[width=0.48\textwidth]{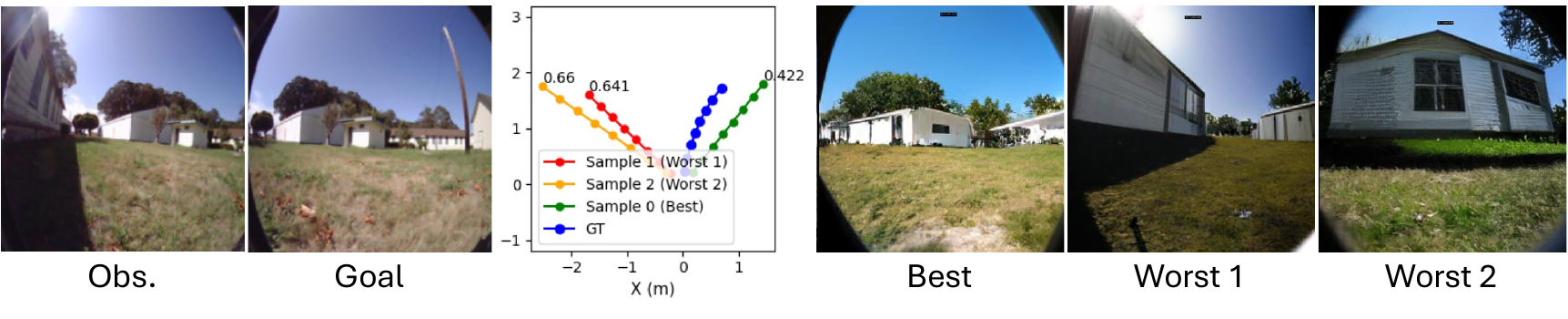}
    \vspace{-8mm}
    \caption{\textbf{Qualitative results of planning with the proposed \modelname.} 
    {Best and Worst 1\&2 denote the final rollouts corresponding to the simulated trajectories with the minimum and maximum cost, respectively.}}
    \label{fig:qual_plan}
\end{figure}
\input{table/table_ctx_making_cost}
\subsection{Planning in \modelname latent space}
\noindent \textbf{Planning performance.} 
Table~\ref{tab:planning_performance} presents planning results for goal-conditioned visual navigation on the RECON dataset. Our \modelname achieves approximately \textit{40× reduction in planning latency} while maintaining comparable planning accuracy to the SD-VAE baseline that uses 784 tokens. 
Notably, NWM trained with \modelname (using only 16 or 8 tokens) outperforms the FlexTok-based model at both 16 and 64 token configurations.
This performance advantage stems from our encoder's design: $\mathcal{E}_\textrm{compact}$ leverages frozen semantic features from pretrained vision foundation models for latent token resampling, ensuring that semantically meaningful features—spatial relationships, object configurations, and scene structure—are prioritized over reconstruction-oriented details like textures or lighting (Fig.~\ref{fig:qual_atten}). This semantic prioritization enables the world model to focus on action-relevant state transitions, as further validated by our inverse dynamics modeling results (Tab.~\ref{tab:idm}).

\noindent \textbf{Qualitative examples.} 
Fig.~\ref{fig:qual_plan} presents qualitative planning results with \modelname. While fine-grained visual details such as textures and shadows are synthesized rather than reconstructed, the rollouts accurately preserve planning-critical information needed for effective goal-reaching: spatial layout and object positions.

\noindent \textbf{In-depth analysis on the effect of history masking, tokenizer, and cost function.}
Table~\ref{tab:ctx_masking_cost} analyzes three key design choices that contribute to our method's effectiveness.
(\textit{Left}) \textit{History masking}: Results demonstrate that incorporating history masking during world model training improves planning accuracy, validating that masking encourages robust temporal dependency learning.
(\textit{Middle}) \textit{Cost function}: We compare planning accuracy when the cost is computed in pixel space (LPIPS) versus latent space (L1 distance between $\vz$). 
While LPIPS achieves marginally better planning accuracy, computing distances in latent token space offers substantial computational benefits—achieving nearly 80× speedup in planning time compared to SD-VAE-based planning (Tab.~\ref{tab:planning_performance}).
(\textit{Right}) \textit{Frozen encoder}: Similar to the reconstruction quality degradation in Tab.~\ref{tab:encoder_ablation}, updating the vision encoder during tokenizer training leads to degraded planning performance as well. Fine-tuning the vision foundation model shifts its representations toward reconstruction objectives, causing compact tokens to lose high-level semantic information essential for planning.

\input{table/table_robonet}
\begin{figure}[t!]
    \centering
    \includegraphics[width=0.48\textwidth]{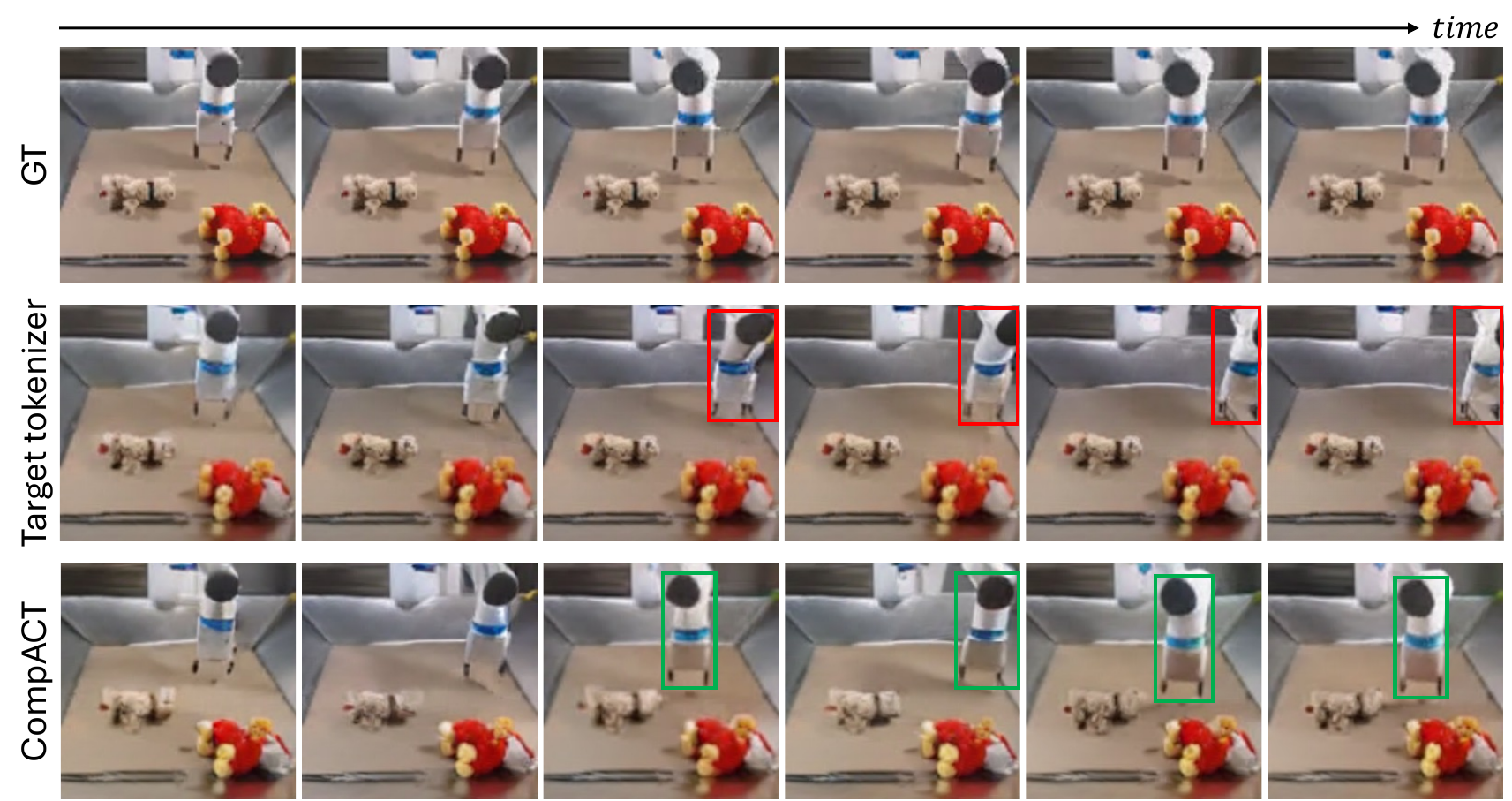}
    \vspace{-6mm}
    \caption{\textbf{Qualitative results of action-conditioned video generation.} Red and green boxes indicate incorrect and correct end-effector positions, respectively.}
    \label{fig:qual_action}
    \vspace{-3mm}
\end{figure}
\noindent \textbf{Action conditioned video prediction.}
To further validate that \modelname's modular latent representation preserves action-relevant information, we evaluate action-conditioned video generation on RoboNet~\cite{dasari2019robonet}. Table~\ref{tab:robonet} shows that \modelname achieves 3× lower action prediction error (APE) compared to the 256-token baseline, while providing 5.2× faster generation. 
APE measures how accurately an IDM (trained as in Tab.~\ref{tab:idm}) can recover the conditioning action from generated video frames—a metric that directly evaluates whether the world model captures action-driven dynamics.
The substantial improvement in APE validates our hypothesis that \modelname's modular tokens, which naturally attend to dynamic objects like end-effector and manipulation targets (Fig.~\ref{fig:qual_atten}), are inherently better suited for modeling action-conditioned state transitions.
Qualitative results (Fig.~\ref{fig:qual_action}) further support this: videos generated from \modelname latents maintain consistent action-driven end-effector movements, while the target tokenizer fails to preserve these dynamics.

%% file: table/table_recon.tex
\begin{table}
\centering
\caption{
\textbf{Reconstruction performance of \modelname on ImageNet validation split.}
Metrics are computed using open-sourced checkpoints. rFID is measured using clean-fid~\cite{parmar2022aliased}. $\dagger$: Measured using 16 tokens.
}
\vspace{-2mm}
\resizebox{0.85\columnwidth}{!}{
\begin{tabular}{l|cc|cc}
\toprule
 \textbf{Model} &  \textbf{Type} & \textbf{\#tok} & \textbf{rFID} $\downarrow$ & \textbf{IS} $\uparrow$\\  \midrule
 SD-VAE~\cite{rombach2022high} &  cont. & 1024 & 0.64 & 223.8 \\
 MaskGIT-VQGAN~\cite{chang2022maskgit} & disc. & 256 & 1.83 & 186.7 \\
 TA-TiTok-VQ~\cite{kim2025democratizing} & disc. & 32 & 3.95 & 219.6\\
 TA-TiTok-KL~\cite{kim2025democratizing} & cont. & 32 & 1.93 & 222.0\\
 FlexTok~\cite{bachmann2025flextok} & disc. & 1-256 & 5.60$^\dagger$ & 114.9 \\  \midrule
 \rowcolor{lightgray!50}\modelname & disc. & 16 & 2.40 & 209.0 \\
 \rowcolor{lightgray!50}\modelname & disc. & 8 & 3.21 & 207.5 \\
 \bottomrule
\end{tabular}
}
\label{tab:vs_2d_tok}
\end{table}

%% file: table/table_encoder_ablation.tex
\begin{table}[t]
\centering
\caption{\textbf{Ablation on \modelname tokenizer.} rFID is measured on ImageNet~\cite{deng2009imagenet} validation split using clean-fid~\cite{parmar2022aliased}.}
\vspace{-2mm}
\small
\resizebox{\columnwidth}{!}{
\begin{tabular}{ll|cc}
\toprule
\multicolumn{2}{l|}{\textbf{Configuration}} & \textbf{rFID$\downarrow$} & \textbf{\#tok} \\
\midrule
\multicolumn{2}{l|}{Target tokenizer ($\mathcal{D}_{\psi} \circ \mathcal{E}_{\psi}$)~\cite{chang2022maskgit}} & 1.83 & 256 \\ 
\rowcolor{lightgray!50}\multicolumn{2}{l|}{DINOv3-B (frozen) + latent resampler} & 2.40 & 16 \\\midrule
$\mathcal{D}_\textrm{compact}$ & w/o generative decoding & 28.80 & 16 \\ \midrule
\multirow{3}{*}{$\mathcal{E}_\textrm{compact}$} & ViT-B (trained from scratch) + [REG] & 7.28 & 16 \\
 & DINOv3-B (finetuned) + [REG] & 4.51 & 16 \\
 & DINOv3-B (finetuned) + latent resampler & 5.22 & 16 \\
\bottomrule
\end{tabular}
}
\label{tab:encoder_ablation}
\end{table}

%% file: table/table_idm.tex
\begin{table}[t]
\centering
\caption{\textbf{Performance of Inverse Dynamics Model (IDM) trained with different tokenizers on RoboNet~\cite{dasari2019robonet}.} L1 error and $R^2$ are measured between ground truth and predicted end effector position.}
\vspace{-2mm}
\scalebox{0.75}{
\begin{tabular}{lc|cc}
\toprule
\textbf{Tokenizer} & \textbf{\#tok} & \textbf{L1 err$\downarrow$} & \textbf{$R^2$ $\uparrow$}\\ \midrule
Target tokenizer ($\mathcal{D}_{\psi} \circ \mathcal{E}_{\psi}$)~\cite{chang2022maskgit} & 256& 0.093& 0.684\\
\rowcolor{lightgray!50}\modelname & 16& 0.091& 0.716\\
\bottomrule
\end{tabular}
}
\label{tab:idm}
\end{table}

%% file: table/table_planning_performance.tex
\begin{table}
\centering
\caption{\textbf{Planning performance of NWM on RECON benchmark with different tokenizers.} Latency (sec) represents single trajectory optimization time using a single RTX 6000 ADA GPU.}
\vspace{-2mm}
\small
\resizebox{\columnwidth}{!}{
    \begin{tabular}{lc|cc|cc|c}
    \toprule
    \multirow{2}{*}{\textbf{Tokenizer}} & \multirow{2}{*}{\#tok} & \multicolumn{2}{c|}{\textbf{RECON}} & \multicolumn{2}{c|}{\textbf{SCAND}} & \multirow{2}{*}{\textbf{Latency}$\downarrow$}\\
     &  & ATE$\downarrow$ & RPE$\downarrow$ & ATE$\downarrow$ & RPE$\downarrow$ &  \\
    \midrule
    SD-VAE~\cite{rombach2022high} & 784 &  \textbf{1.262} & \textbf{0.354} & \textbf{1.065}& \textbf{0.291} &178.78 \\ \midrule
    FlexTok~\cite{bachmann2025flextok} & 64 &  1.484 & 0.400 & 1.578& 0.378 & 16.68\\
    FlexTok~\cite{bachmann2025flextok} & 16 &  1.625 & 0.446 & 1.503& 0.362& 14.48 \\ \midrule
    \rowcolor{lightgray!50}\modelname & 16 & \underline{1.330} & \underline{0.390} & \underline{1.358}& \underline{0.336} & \underline{5.78} \\
    \rowcolor{lightgray!50}\modelname & 8 & 1.373 & 0.401 &  1.391&  0.346& \textbf{4.83}\\
    \bottomrule
    \end{tabular}
}
\label{tab:planning_performance}
\end{table}

%% file: table/table_ctx_making_cost.tex
\begin{table}[t]
\caption{\textbf{Effect of design choices in terms of planning accuracy on RECON.} (\textit{Left}): Effect of the history masking in the world model $f_\theta$ (Sec.~\ref{sec:compact_lwm}). (\textit{Middle}): Comparison between the different cost function (Sec.~\ref{sec:formulation}). (\textit{Right}): Effect of the freezing vision encoder during tokenizer training (Sec~\ref{sec:comppacttok_enc}).}
\vspace{-2mm}
\centering
\begin{minipage}{0.22\columnwidth}
    \centering
    \scalebox{0.75}{
    \begin{tabular}{c|c}
    \toprule
    \textbf{Hist. mask} & \textbf{ATE}\\ \midrule
    \rowcolor{lightgray!50}\checkmark & 1.330 \\
      & 1.480 \\
    \bottomrule
    \end{tabular}
    }
\end{minipage}
\hfill
\begin{minipage}{0.28\columnwidth}
    \centering
    \scalebox{0.75}{
    \begin{tabular}{c|cc}
    \toprule
     \textbf{Cost} & \textbf{ATE} & \textbf{Latency} \\ \midrule
     \rowcolor{lightgray!50}Pixel & 1.330 & 5.78sec \\
     Latent & 1.379 & 2.15sec \\
    \bottomrule
    \end{tabular}
    }
\end{minipage}
\hfill
\begin{minipage}{0.3\columnwidth}
    \centering
    \scalebox{0.75}{
    \begin{tabular}{c|c}
    \toprule
     \textbf{Frozen} & \textbf{ATE} \\ \midrule
     \rowcolor{lightgray!50}\checkmark & 1.330 \\
      & 1.472\\
    \bottomrule
    \end{tabular}
    }
\end{minipage}
\label{tab:ctx_masking_cost}
\end{table}

%% file: table/table_robonet.tex
\begin{table}[t]
\caption{\textbf{Action-conditioned video prediction results on RoboNet~\cite{dasari2019robonet}.} Action prediction error is measured using IDM reported in Tab~\ref{tab:idm}. Latency is measure for when generating next 14 frames on a single RTX 6000 ADA GPU.}
\vspace{-2mm}
\centering
\scalebox{0.75}{
\begin{tabular}{lc|ccc}
\toprule
\textbf{Model} & \textbf{\#tok} & \textbf{APE} & \textbf{Latency (sec)} \\ \midrule
Target tokenizer ($\mathcal{D}_{\psi} \circ \mathcal{E}_{\psi}$)~\cite{chang2022maskgit} & 256& 0.3383 & 3.826 \\
\rowcolor{lightgray!50}\modelname & 16& 0.1122 & 0.740 \\
\bottomrule
\end{tabular}
}
\label{tab:robonet}
\end{table}

%% file: sec/5_conclusion.tex
\section{Conclusion}
In this work, we present \modelname, a compact tokenizer that achieves extreme compression by representing images with only 16 or 8 discrete tokens while preserving planning-critical information.
The key insight enabling this compression is our use of frozen vision foundation models as the encoder backbone: by leveraging pretrained semantic representations, our tokenizer naturally prioritizes high-level spatial and semantic features over reconstruction-oriented details.
We demonstrate that world models trained in this compact latent space outperform baselines requiring larger token counts while achieving a 40× speedup in planning, validating our hypothesis that effective planning requires semantic abstraction rather than photorealistic reconstruction.

%% file: sec/X_suppl.tex
\clearpage
\maketitlesupplementary
\setcounter{section}{0}
\renewcommand{\thesection}{\Alph{section}}
\noindent
This supplementary material provides detailed implementation specifics and additional experimental results for \modelname. We organize the content as follows:
\begin{itemize}
    \item \textbf{Sec.~\ref{sec:supp_planning_sufficiency}} provides theoretical framework for the planning sufficiency of compact tokens.
    \item \textbf{Sec.~\ref{sec:supp_backbone}} presents cross-backbone ablation for \modelname.
    \item \textbf{Sec.~\ref{sec:supp_latency_breakdown}} provides a planning latency breakdown.
    \item \textbf{Sec.~\ref{sec:supp_rm_closed_loop}} evaluates closed-loop robot arm manipulation on Robomimic~\cite{robomimic2021}.
    \item \textbf{Sec.~\ref{sec:supp_tok}} describes the complete \modelname tokenizer architecture, training procedure, and hyperparameters.
    \item \textbf{Sec.~\ref{sec:supp_idm}} details the Inverse Dynamics Model (IDM) used for action prediction experiments on RoboNet~\cite{dasari2019robonet}.
    \item \textbf{Sec.~\ref{sec:supp_wm_arch}} presents the world model architectures for both navigation (autoregressive with fixed history window) and manipulation tasks (block-causal parallel prediction).
    \item \textbf{Sec.~\ref{sec:supp_planning}} explains the Cross-Entropy Method used for navigation planning.
    \item \textbf{Sec.~\ref{sec:supp_qual}} provides additional qualitative results including reconstruction examples, attention visualizations, and planning rollouts.
    \item \textbf{Sec.~\ref{sec:supp_eff}} provides comparison between tokenizers in terms of planning efficiency.
    \item \textbf{Sec.~\ref{sec:supp_scale_up}} presents that with compact latent tokens we can scale up the world model while remaining efficient.
\end{itemize}

\input{sec/supp/planning_suff}

\input{sec/supp/other_backbone}
\input{sec/supp/latency_breakdown}
\input{sec/supp/robomimic_closed_loop}

\begin{table}[!t]
\small
\centering
\caption{
\textbf{Training hyperparameters for \modelname.}
}
\vspace{-3mm}
\scalebox{1.0}{
\begin{tabular}{l|c}
 \toprule
 hparams & \modelname \\ \midrule
optimizer & AdamW \\ 
$\beta_1$ & 0.9  \\
$\beta_2$ & 0.999 \\
weight decay & 0.01 \\ 
 
 lr  & 0.0001\\
 lr scheduling & cosine \\
 lr warmup steps & 10K \\ 

 batch size (ImageNet) & 512 \\
 training steps (ImageNet) & 500K \\
 training steps (RoboNet finetuning~\cite{dasari2019robonet}) & 256 \\
 training steps (RoboNet finetuning~\cite{dasari2019robonet}) & 100K  \\ \bottomrule
\end{tabular}
}
\label{tab:tok_training_hparams}
\end{table}

\begin{table}[!t]
\small
\centering
\caption{
\textbf{Model architecture hyperparameters for \modelname.}
}
\vspace{-3mm}
\scalebox{1.0}{
\begin{tabular}{l|cc}
 \toprule
 hparams & Latent resampler & $\mathcal{D}_\text{compact}$ \\ \midrule
depth & 5 & 16 \\ 
dim & 768 & 1024 \\
MLP dim & 3072 & 4096\\
heads & 8 & 8 \\
\bottomrule
\end{tabular}
}
\label{tab:tok_model_hparams}
\end{table}

\section{Details of \modelname tokenizer}\label{sec:supp_tok}
Tab.~\ref{tab:tok_training_hparams} and Tab.~\ref{tab:tok_model_hparams} summarize the training and model architecture hyperparameters of \modelname, respectively.

\noindent \textbf{Tokenizer architecture.}
We use frozen DINOv3-B~\cite{simeoni2025dinov3} in the encoder $\mathcal{E}_\textrm{compact}$.
In DINOv3-B, we re-initialized the last layer normalization's affine parameters (weight and bias to 1 and 0, respectively). We found that using pretrained affine parameters as-is results in codebook collapse during training, since the output of $\mathcal{E}_\textrm{compact}$ has specific statistics when using pretrained layernorm affine parameters.
For the latent resampler, we use the 5 transformer decoder blocks similar to DETR~\cite{carion2020end} and Perceiver~\cite{jaegle2021perceiver}. Number of learnable queries determines the number of latent tokens used in \modelname.
We use the Finite Scalar Quantization~\cite{mentzer2023finite} (FSQ) for discretizing the output of latent resampler. Levels per channel are set to $[8, 8, 8, 5, 5, 5]$, which is the recommended level configuration to construct approximately $2^{16}$ size codebook.
\begin{figure}[t!]
    \centering
    \includegraphics[height=9cm]{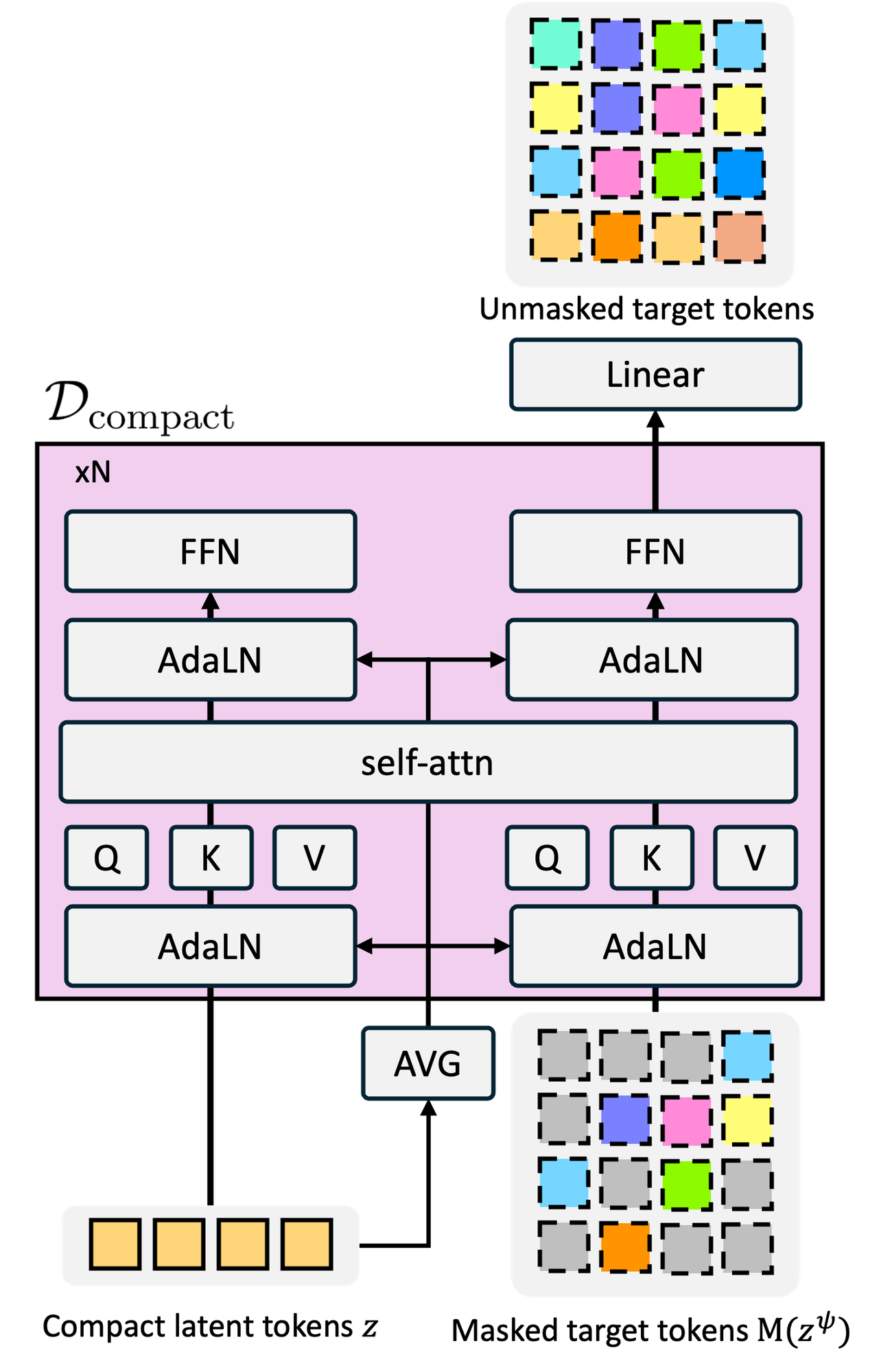}
\vspace{-3mm}
    \caption{\textbf{$\mathcal{D}_\text{compact}$ architecture.} 
The proposed $\mathcal{D}_\text{compact}$ is based on MM-DiT~\cite{esser2024scaling}, a DiT~\cite{peebles2023scalable} variant designed for multimodal input processing. 
The architecture consists of two parallel processing streams: one for compact latent tokens $\vz$ and another for target latent tokens $\vz^\psi$, which are fused through self attention over the concatenated token sequence. For the overall depiction of \modelname, we refer to Fig.~2 in the main paper.}
    \label{fig:dcompact}
\end{figure}
For decoder $\mathcal{D}_\textrm{compact}$, we use the MM-DiT~\cite{esser2024scaling}, which takes compact latent as a condition.
Fig.~\ref{fig:dcompact} depicts the details architecture of the decoder $\mathcal{D}_\textrm{compact}$, which is shown in simplified form in Fig.~2 in the main paper.
$\mathcal{D}_\textrm{compact}$ is trained from the scratch.
Proposed \modelname comprises 775M parameters in total, including $\mathcal{E}_\textrm{compact}$, $\mathcal{D}_\textrm{compact}$, and $\mathcal{D}_\psi$.

\noindent \textbf{Masked generative modeling.}
As noted in Sec.~3.2.2 of the main paper, $\mathcal{D}_\textrm{compact}$ is formulated as a masked generative model. During training, we randomly sample the mask ratio from $(0, 1]$ following the cosine masking schedule~\cite{chang2022maskgit}.
During inference, we decode target latent tokens $z^{\psi}$ by progressively unmasking them from a fully masked sequence, using compact latent tokens $z$ as conditioning. In each iteration, a fraction of predictions with high confidence is accepted, while remaining tokens are re-masked. We follow the same cosine masking schedule during inference.

\noindent \textbf{Dataset.}
\modelname tokenizer is trained on ImageNet-1K~\cite{deng2009imagenet}, on $224 \times 224$ (for fair comparison in navigation experiment following NWM~\cite{bar2025navigation}) and $256 \times 256$ resolution.
For the augmentation, we used random crop and random horizontal flip.
We additionally finetune the tokenizer for the experiment with RoboNet~\cite{dasari2019robonet} since robot data has a significant domain gap compared to the ImageNet data. For RoboNet finetuning we used center cropped $256 \times 256$ resolution images.

\noindent \textbf{Training and inference.}
For ImageNet~\cite{deng2009imagenet} pretraining, \modelname is trained for 500K steps with batch size of 512. We use the AdamW~\cite{loshchilov2017decoupled} with $1e-4$ learning rate. $\beta_1$ and $\beta_2$ are set to $0.9$ and $0.999$, respectively.
During inference, sampling step in $\mathcal{D}_\textrm{compact}$ is set to 16.

\begin{figure}[t!]
    \centering
    \includegraphics[height=6cm]{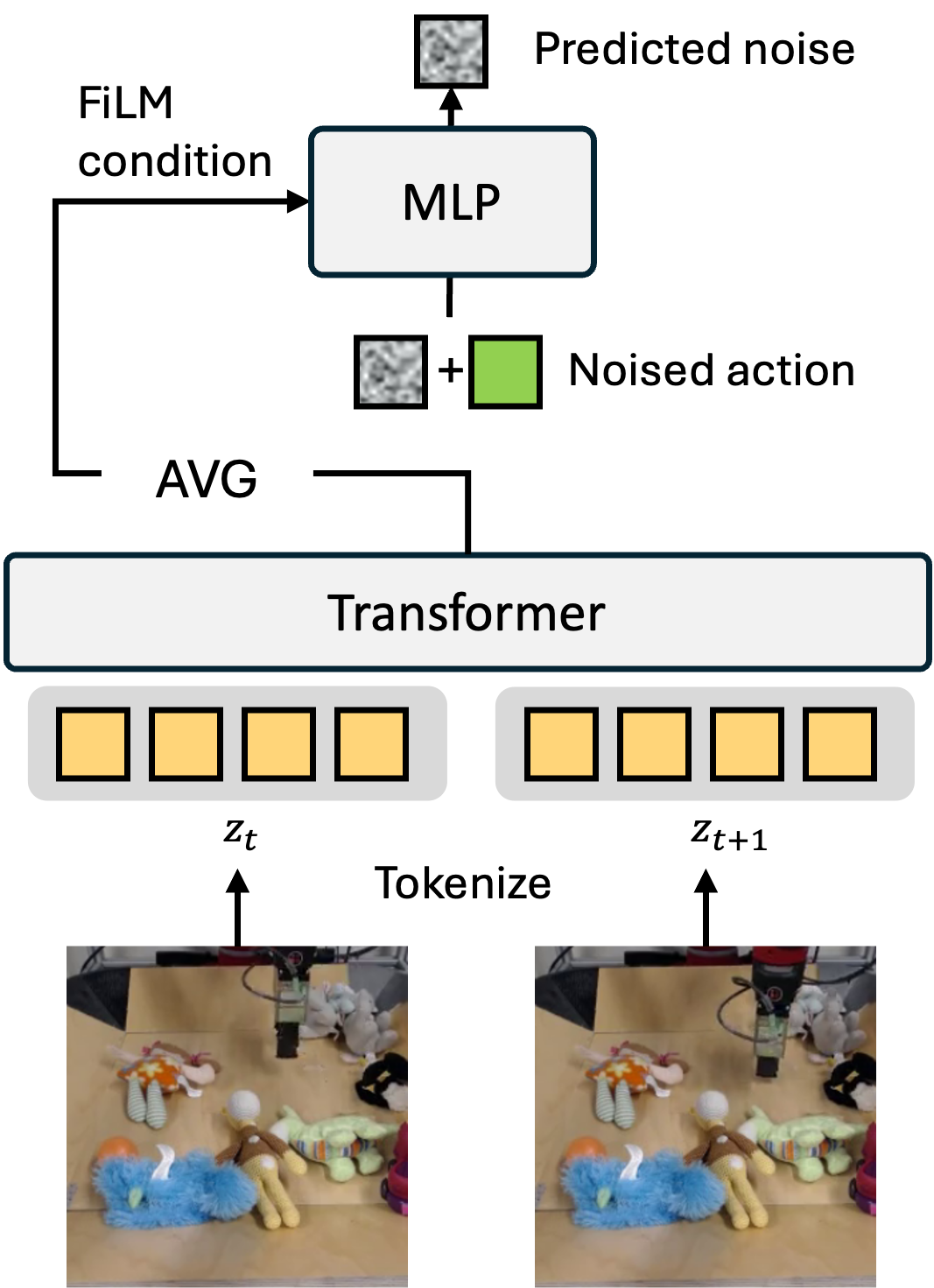}
    \caption{\textbf{Inverse Dynamics Model (IDM) architecture.} 
    Consecutive frames are tokenized and processed through a transformer-based frame encoder, which produces a single conditioning vector via average pooling. This vector conditions an action denoiser implemented as a diffusion policy~\cite{chi2023diffusionpolicy}, which predicts the action taken between the two frames.
    }
    \label{fig:idm_arch}
\end{figure}

\begin{table}
\centering
\caption{
\textbf{Model architecture hyperparameters for IDM.}
}
\vspace{-3mm}
\begin{minipage}{0.49\columnwidth}
\centering
\scalebox{0.85}{
    \begin{tabular}{l|c}
    \toprule
    hparams & Frame encoder \\ \midrule
    depth & 4  \\ 
    dim & 512 \\
    MLP dim & 2048 \\
    heads & 8 \\ 
    \#params & 13.5M \\ \bottomrule
    \end{tabular}
}
\end{minipage}
\hfill
\begin{minipage}{0.49\columnwidth}
\centering
\scalebox{0.85}{
    \begin{tabular}{l|c}
    \toprule
    hparams & Action denoiser \\ \midrule
    \# linear & 4  \\ 
    hidden dim & 512 \\
    diffusion & DDPM~\cite{ho2020denoising} \\
    timesteps & 1000 \\
    \#params & 3.6M \\ \bottomrule
    \end{tabular}
}
\end{minipage}
\label{tab:idm_model_hparams}
\end{table}

\section{Details of inverse dynamics model (IDM)}
\label{sec:supp_idm}
Fig.~\ref{fig:idm_arch} and Tab.~\ref{tab:idm_model_hparams} present the model architecture and hyperparameters of IDM, respectively.

\noindent \textbf{Architecture.}
The Inverse Dynamics Model (IDM) used for RoboNet experiments consists of two modules: a frame encoder and an action denoiser.
Given latent tokens from two consecutive frames, the frame encoder first processes the concatenated token sequence through a 4-layer transformer encoder.
The target tokenizer~\cite{chang2022maskgit} requires processing 512 tokens, while \modelname requires only 32 tokens—a 16$\times$ reduction in sequence length.
The frame encoder output is average-pooled into a single vector, which serves as the conditioning signal for the action denoiser.
The action denoiser is a multi-layer perceptron (MLP) with 4 linear layers (hidden dimension 512), SiLU activation~\cite{hendrycks2016gaussian}, and FiLM conditioning~\cite{perez2018film} between each layer.
The denoiser takes a noisy 5-dimensional action and predicts the applied noise.
Following diffusion policy~\cite{chi2023diffusionpolicy}, we use DDPM~\cite{ho2020denoising} with a squared cosine schedule~\cite{nichol2021improved}.

\noindent \textbf{Dataset.}
IDM is trained on RoboNet~\cite{dasari2019robonet}. Frame pairs are randomly sampled at each iteration. Frames are pre-encoded using tokenizer (\modelname or target tokenizer) with $256 \times 256$ resolution center-cropped images.
250 episodes are used for test.

\noindent \textbf{Training and inference.}
IDM is trained for 100K steps with batch size of 128. We use the AdamW~\cite{loshchilov2017decoupled} with $1e-4$ learning rate. $\beta_1$ and $\beta_2$ are set to $0.9$ and $0.999$, respectively.
During inference, we use 1000 diffusion timesteps for sampling.

\noindent \textbf{Evaluation.}
We evaluate IDM performance using two metrics computed on predicted end-effector positions. 
First, we measure L1 error between the predicted and ground truth end-effector positions. 
Second, we compute the coefficient of determination (R$^2$), which measures how well the model explains the variance in end-effector movements. R$^2$ ranges from 0 to 1, where higher values indicate better prediction. For instance, R$^2 = 0.9$ means that model can explain 90\% of the variance in end-effector movements.

\begin{figure}[t!]
    \centering
    \includegraphics[height=9cm]{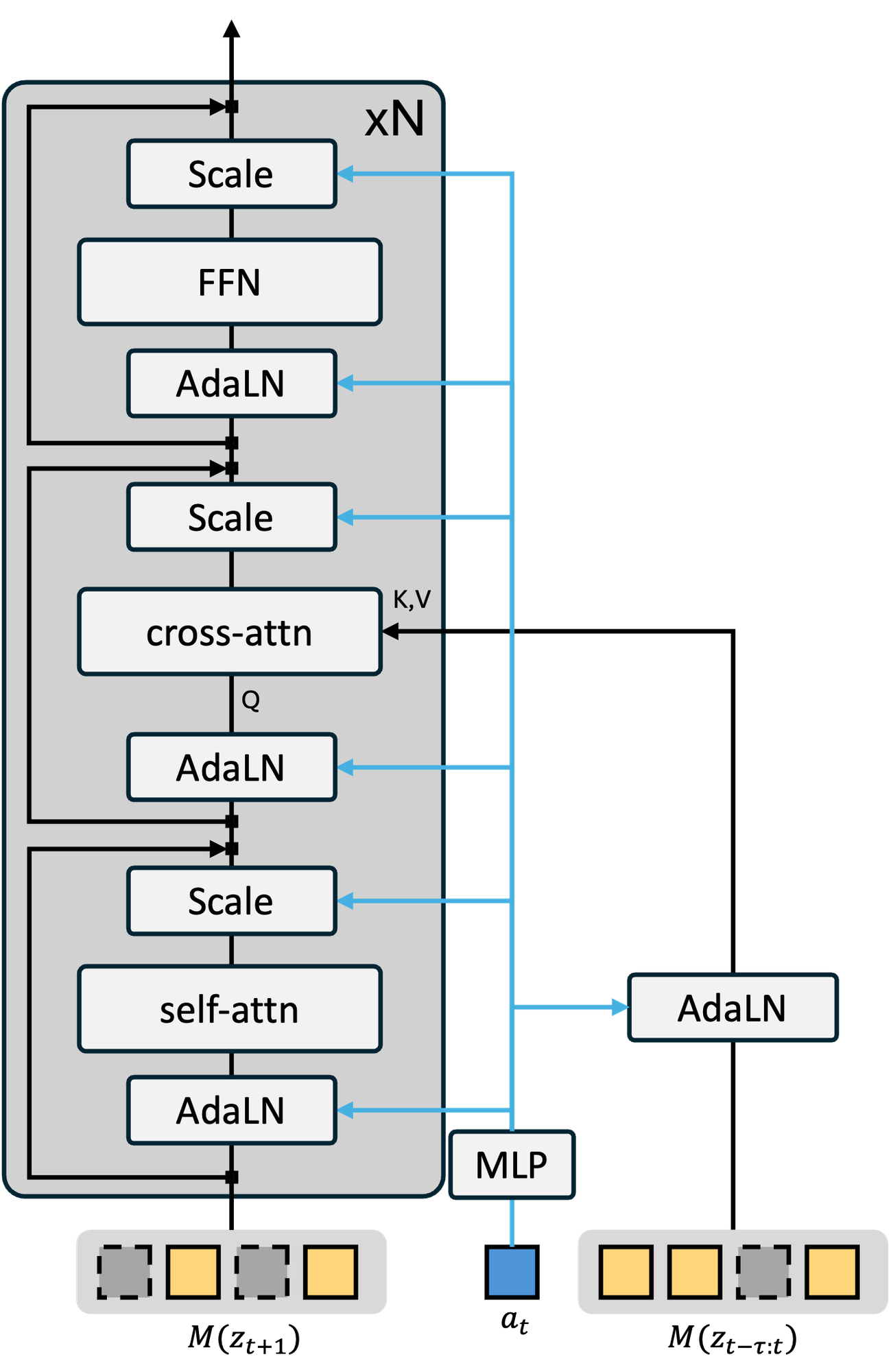}
    \caption{\textbf{World model $f_\phi$ architecture for navigation.} 
For navigation tasks, the world model is formulated as an action-conditioned generative model that autoregressively generates the next frame. 
The model follows the CDiT architecture proposed in NWM~\cite{bar2025navigation}, where actions are conditioned via adaptive layer normalization and history frames are conditioned via cross-attention.}
    \label{fig:wm_ar_arch}
\end{figure}

\begin{figure*}[t!]
    \centering
    \includegraphics[height=4cm]{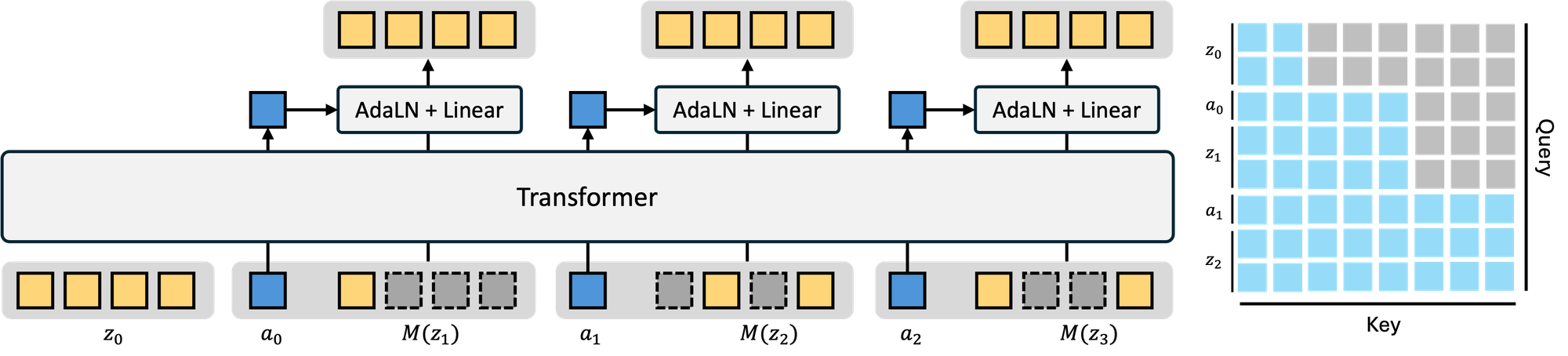}
    \vspace{-3mm}
    \caption{\textbf{World model $f_\phi$ architecture for manipulation.} 
For manipulation tasks, the world model is formulated as a block-causal transformer that enables parallel prediction of multiple future frames conditioned on historical timesteps. 
Action tokens are used to condition the prediction head (AdaLN + linear layer) that unmasks latent tokens at each future timestep.
    }
    \label{fig:wm_block_causal_arch}
\end{figure*}

\section{World model details}
\label{sec:supp_wm_arch}

\begin{table}
\small
\centering
\caption{
\textbf{Model architecture hyperparameters for world model.}
}
\vspace{-3mm}
\scalebox{0.85}{
\begin{tabular}{l|cc}
 \toprule
 hparams & navigation & manipulation \\ \midrule
depth & 12 & 16 \\ 
dim & 768 & 1024 \\
MLP dim & 3072 & 4096 \\
heads & 12 & 16 \\
\# params & 243M & 270M \\
\bottomrule
\end{tabular}
}
\label{tab:wm_nav_model_hparams}
\end{table}

\subsection{Navigation: autoregressive with fixed history window}
\label{sec:supp_wm_arch_nav}
For navigation tasks, we adopt an autoregressive formulation where the world model predicts the next latent state $z_{t+1}$ conditioned on a fixed-length history window of past states $\{z_{t-\tau}, \ldots, z_t\}$ and the action $a_t$. This design follows the Navigation World Models (NWM)~\cite{bar2025navigation} framework but operates in our compact discrete latent space with $N \leq 16$ tokens per timestep.
Fig.~\ref{fig:wm_ar_arch} and Tab.~\ref{tab:wm_nav_model_hparams} present the model architecture and hyperparameters of world model for navigation, respectively.

\noindent \textbf{Architecture.}
Fig.~\ref{fig:wm_ar_arch} illustrates the architecture. We employ a DiT-based~\cite{peebles2023scalable} architecture with multiple transformer blocks, where each block consists of adaptive layer normalization (AdaLN), multi-head self-attention, cross-attention, and feed-forward networks (FFN). The action $a_t$ is first encoded through an MLP and used to modulate the transformer blocks via AdaLN, similar to how DiT conditions on class labels. 
The model takes as input the masked future tokens $M(z_{t+1})$ and attends to the history window $M(z_{t-\tau:t})$ through cross-attention layers, where $M(\cdot)$ denotes the masking operation. The history tokens serve as keys and values (K, V), while the future tokens act as queries (Q). 
We use $\tau = 4$ for the history window length and $N = 12$ for the DiT blocks.
For SD-VAE experiments, we follow the exact architecture from NWM~\cite{bar2025navigation} (CDiT-B). 
For discrete tokenizers (FlexTok~\cite{bachmann2025flextok} and \modelname), we use the same configuration with two adaptations to handle discrete tokens: (1) embedding and classification layers instead of continuous projections, and (2) masked generative modeling instead of diffusion-based generation.

\noindent \textbf{History masking.}
Following the principles of diffusion forcing~\cite{chen2024diffusion}, we randomly mask tokens in the history window during training. This encourages the model to learn robust temporal dependencies and improves action conditioning. During each training iteration, we randomly mask tokens in each historical frame. At inference time, we use the slightly masked (20\%) history for stable rollouts. Ablation results in Table~5 of the main paper demonstrate that history masking improves planning accuracy.

\noindent \textbf{Dataset.}
The navigation world model is trained on three datasets: RECON~\cite{shah2021rapid}, SCAND~\cite{karnan2022socially}, and HuRoN~\cite{hirose2023sacson}. 
For HuRoN, we use the publicly available low-resolution version.
All datasets contain first-person navigation trajectories with corresponding actions (changes in x-axis, y-axis, and yaw). Frames are center-cropped and resized to $224 \times 224$ resolution. We follow the same train/test splits as NWM~\cite{bar2025navigation}. 
We exclude Tartan~\cite{wang2020tartanair} and Ego4D~\cite{grauman2022ego4d}, which were used in the original NWM~\cite{bar2025navigation}, for the following reasons: Tartan consists predominantly of forward-only actions, and Ego4D, while providing large-scale data, is computationally expensive to process. We found that we can fairly reproduce navigation planning performance without them.

\noindent \textbf{Reproducing NWM~\cite{bar2025navigation}.}
The SD-VAE row in Table~4 of the main paper can be considered as our reproduction of NWM~\cite{bar2025navigation}. The original NWM reports ATE of 1.13 and RPE of 0.35 on RECON, while our reproduction achieves ATE of 1.262 and RPE of 0.354. Despite several differences—using a smaller model (CDiT-B instead of CDiT-XL used in the original NWM), low-resolution data for HuRoN, and excluded datasets (Tartan and Ego4D)—we fairly reproduce the navigation planning performance.

\noindent \textbf{Training and inference.}
The navigation world model is trained for 200K steps with a batch size of 128. We use AdamW optimizer~\cite{loshchilov2017decoupled} with a learning rate of $1 \times 10^{-4}$.  We set $\beta_1 = 0.9$ and $\beta_2 = 0.999$, with weight decay of $1 \times 10^{-2}$.
For the masked generative modeling objective, we randomly sample the mask ratio from $(0, 1]$ following the cosine masking schedule~\cite{chang2022maskgit}.
During inference, we used 8 and 4 sampling steps for 16 and 8 latent tokens, respectively.

\noindent \textbf{Evaluation.}
We evaluate navigation planning performance using two standard trajectory metrics: Absolute Trajectory Error (ATE) and Relative Pose Error (RPE)~\cite{sturm2012evaluating}. 
First, ATE measures the RMSE between predicted and ground truth transformations from the initial frame across the entire trajectory between corresponding timesteps. 
Second, RPE measures the error in relative transformations between consecutive timesteps. 
Lower values indicate better planning performance for both metrics.
For details of model-predictive control framework used for navigation planning, we refer Sec.~\ref{sec:supp_planning}.

\subsection{Manipulation: block-causal parallel prediction}
\label{sec:supp_wm_arch_mani}
For robotic manipulation tasks on RoboNet~\cite{dasari2019robonet}, we adopt a block-causal transformer architecture that predicts multiple future frames $\{z_{t+1}, \ldots, z_{t+K}\}$ in parallel, conditioned on the initial observation $z_t$ and action sequence $\{a_t, \ldots, a_{t+K-1}\}$. This parallel formulation is more suitable for video generation tasks where we need to produce extended action-conditioned rollouts efficiently.
Fig.~\ref{fig:wm_block_causal_arch} and Tab.~\ref{tab:wm_nav_model_hparams} present the model architecture and hyperparameters of world model for manipulation, respectively.

\noindent \textbf{Architecture.}
Figure~\ref{fig:wm_block_causal_arch} illustrates the block-causal architecture. Unlike the autoregressive model, all future frames are processed simultaneously within a single transformer. The input sequence is constructed by interleaving observations and actions: $[z_{t-\tau:t}, a_t, M(z_{t+1}), \ldots, a_{t+H-1}, M(z_{t+H})]$, where $M(\cdot)$ denotes masked tokens. 
The key component is the block-causal attention mask (shown on the right of Figure~\ref{fig:wm_block_causal_arch}): tokens at timestep $t+i$ can attend to all tokens up to and including $z_{t+i}$ and $a_{t+i-1}$, but cannot attend to future observations or actions. This preserves causal structure while enabling parallel prediction. 
Each action is encoded through a linear layer. The output token corresponding to each action is then used to condition a prediction head (AdaLN + Linear layer), which produces the unmasked tokens for the subsequent frame. 
During training, we set the prediction horizon to $H=14$ (predicting the next 14 frames in parallel). However, the model can generalize to arbitrary horizon lengths since we use causal masking. We provide the first two frames of each episode as context for the model to condition on ($\tau=2$).

\noindent \textbf{Baseline configuration.}
For fair comparison, experiments using the target tokenizer (MaskGIT-VQGAN~\cite{chang2022maskgit}) employ the same block-causal architecture. The only difference is the sequence length: the target tokenizer processes 256 tokens per frame, while CompACT processes only 16 tokens per frame, resulting in significantly faster generation (Tab.~6 in the main paper).

\noindent \textbf{Diffusion forcing interpretation.}
This architecture naturally implements causal, discrete extension of diffusion forcing~\cite{chen2024diffusion}: during training, masked tokens at different timesteps provide varying levels of noisy conditioning to future frames. A frame with fewer masked tokens acts as a cleaner conditioning signal, while heavily masked frames force the model to rely more on learned dynamics and action conditioning. This training scheme improves the model's ability to generate consistent action-conditioned video sequences.

\noindent \textbf{Dataset.}
The manipulation world model is trained on RoboNet~\cite{dasari2019robonet}, similar to IDM experiment.
Frames are pre-encoded using robonet finetuned \modelname with 256 $\times$ 256 resolution center-cropped images. 
{For the target tokenizer~\cite{chang2022maskgit} baseline, we use it as-is without RoboNet finetuning, which is fair since finetuned \modelname also keeps the frozen target tokenizer.}
250 episodes are used for test.

\noindent \textbf{Training and inference.}
The manipulation world model is trained for 100K steps with a batch size of 128. We use AdamW optimizer~\cite{loshchilov2017decoupled} with a learning rate of $1 \times 10^{-4}$. We set $\beta_1 = 0.9$ and $\beta_2 = 0.999$, with weight decay of $1 \times 10^{-2}$. 
For the masked generative modeling objective, we randomly sample the mask ratio from $(0, 1]$ following the cosine masking schedule~\cite{chang2022maskgit}.
We used 100 sampling steps to generate future 14 frames.

\begin{algorithm*}[h]
\caption{Cross-Entropy Method for Navigation Planning}
\begin{algorithmic}[1]
\label{alg:cem}
\REQUIRE Initial frame $o_0$, goal frame $o_{\text{goal}}$, planning horizon $H$, World model $f_\phi$, tokenizer $\mathcal{D}_{\text{compact}} \circ \mathcal{E}_{\text{compact}}$
\REQUIRE CEM params: population size $N$, top samples to be selected $K$, iterations $I$, samples per candidate $M$
\STATE \textcolor{gray}{// Initialize action distribution}
\STATE $\mu \gets (\mu_x, \mu_y, \mu_\phi)$ 
\STATE $\Sigma \gets \text{diag}(\sigma_x^2, \sigma_y^2, \sigma_\phi^2)$ 
\STATE Encode observations: $z_0 \gets \mathcal{E}_{\text{compact}}(o_0)$, $z_{\text{goal}} \gets \mathcal{E}_{\text{compact}}(o_{\text{goal}})$
\FOR{$i = 1$ to $I$}
    \STATE $\mathcal{A} \gets \{(u_j, \phi_j)\}_{j=1}^N$ where $(u_j, \phi_j) \sim \mathcal{N}(\mu, \Sigma)$ \textcolor{gray}{// Sample candidate actions}
    \FOR{each candidate $(u_j, \phi_j) \in \mathcal{A}$}
        \STATE \textcolor{gray}{// Evaluate via stochastic rollouts}
        \FOR{$m = 1$ to $M$}
            \STATE $z_t^{(m)} \gets z_0$
            \STATE $\mathbf{a}^{(j)} \gets \mathtt{ToActionSeq}(u_j, \phi_j, H)$
            \FOR{$t = 0$ to $H-1$}
                \STATE $z_{t+1}^{(m)} \sim f_\phi(z_t^{(m)}, a_t^{(j)})$ \textcolor{gray}{// World model rollout}
            \ENDFOR
            \STATE $\hat{o}_H^{(m)} \gets \mathcal{D}_{\psi} \circ \mathcal{D}_{\text{compact}}(z_H^{(m)})$
            \STATE $\hat{o}_\text{goal} \gets \mathcal{D}_{\psi} \circ \mathcal{D}_{\text{compact}}(z_\text{goal})$
            \STATE $c_m \gets d(\hat{o}_H^{(m)}, \hat{o}_{\text{goal}})$ or $c_m \gets d(z_H^{(m)}, z_{\text{goal}})$\textcolor{gray}{// LPIPS between reconstruction or L1 distance between latent}
        \ENDFOR
        \STATE $C_j \gets \frac{1}{M}\sum_{m=1}^M c_m$
    \ENDFOR
    \STATE $\tilde{\mathcal{A}} \gets$ top-$K$ from $\mathcal{A}$ with lowest costs $\{C_j\}$ \textcolor{gray}{// Select top-K candidates}
    \STATE \textcolor{gray}{// Update distribution parameters}
    \STATE $\mu \gets \frac{1}{K}\sum_{(u,\phi) \in \tilde{\mathcal{A}}} (u, \phi)$
    \STATE $\Sigma \gets \frac{1}{K}\sum_{(u,\phi) \in \tilde{\mathcal{A}}} ((u,\phi) - \mu)((u,\phi) - \mu)^\top$
\ENDFOR
\STATE \textbf{return} Action with minimum cost from $\tilde{\mathcal{A}}$
\end{algorithmic}
\end{algorithm*}

\section{Planning with cross-entropy method}
\label{sec:supp_planning}
As described in Sec.~3.1 and Sec.~3.3, we use a trained world model to standalone-plan goal-conditioned navigation trajectories by optimizing distance between final prediction and the goal. 
Here, we provide additional details about the optimization using the Cross-Entropy Method (CEM)~\cite{de2005cem,chua2018cem} and the hyperparameters used.

\noindent\textbf{Cross-Entropy Method.}
For CEM planning, we strictly follow the protocol of NWM~\cite{bar2025navigation} for fair comparison. Algo.~\ref{alg:cem} presents the complete CEM procedure.
Specifically, trajectory is assumed to be a straight line and only its endpoint is optimized, represented by three variables: a single translation $u$ and yaw rotation $\phi$.
This is converted into an action sequence by evenly distributing the translation across $H$ timesteps and applying the rotation only at the final step.
Each action step corresponds to a fixed time interval of 0.25 seconds. 

\noindent\textbf{Hyperparameters.}
We set the population size to 80 since increasing over it does not lead to substantial gain in planning accuracy.
For other hyperparameters, we follow the configuration of NWM: 2 seconds planning ($H=8$), single iteration ($I=1$), top-$K$ selection with $K=5$, and $M=3$ repetitive stochastic rollouts per candidate action.

\noindent\textbf{Distance metric $d(\cdot, \cdot)$.}
The distance between predicted and goal observations can be measured in either pixel space or latent space.
For pixel-space evaluation, we decode latent tokens to images using $\mathcal{D}_{\psi} \circ \mathcal{D}_{\text{compact}}$ and compute the LPIPS distance~\cite{johnson2016perceptual}.
Alternatively, we can compute distances directly in the discrete latent space, which offers significant computational speedup by avoiding the decoding step (Tab.~5 (\textit{middle}) in the main paper).
This latent-space distance computation is enabled by the properties of Finite Scalar Quantization (FSQ)~\cite{mentzer2023finite}, which we use for discretization.
Unlike traditional vector quantization that uses arbitrary codebook indices, FSQ assigns each latent vector to a discrete code based on level-based radix representation, preserving the continuous geometric structure in the discrete space.
Specifically, each dimension of the latent is quantized to one of $L$ equally-spaced levels, allowing us to map discrete token indices back to their level-based radix representation.
Given two discrete latent tokens $z_i, z_j \in \{1, \ldots, K\}$, we compute their distance as:
\begin{equation}
d(z_i, z_j) = \|\mathtt{FSQ}^{-1}(z_i) - \mathtt{FSQ}^{-1}(z_j)\|_1,
\end{equation}
where $\mathtt{FSQ}^{-1}(\cdot)$ maps the discrete index to its corresponding level-based radix representation.
This distance in the discrete latent space provides a meaningful measure of semantic similarity while enabling faster planning with marginal degradation compared to pixel-space metrics.

\begin{figure}[t!]
    \centering
    \includegraphics[width=0.45\textwidth]{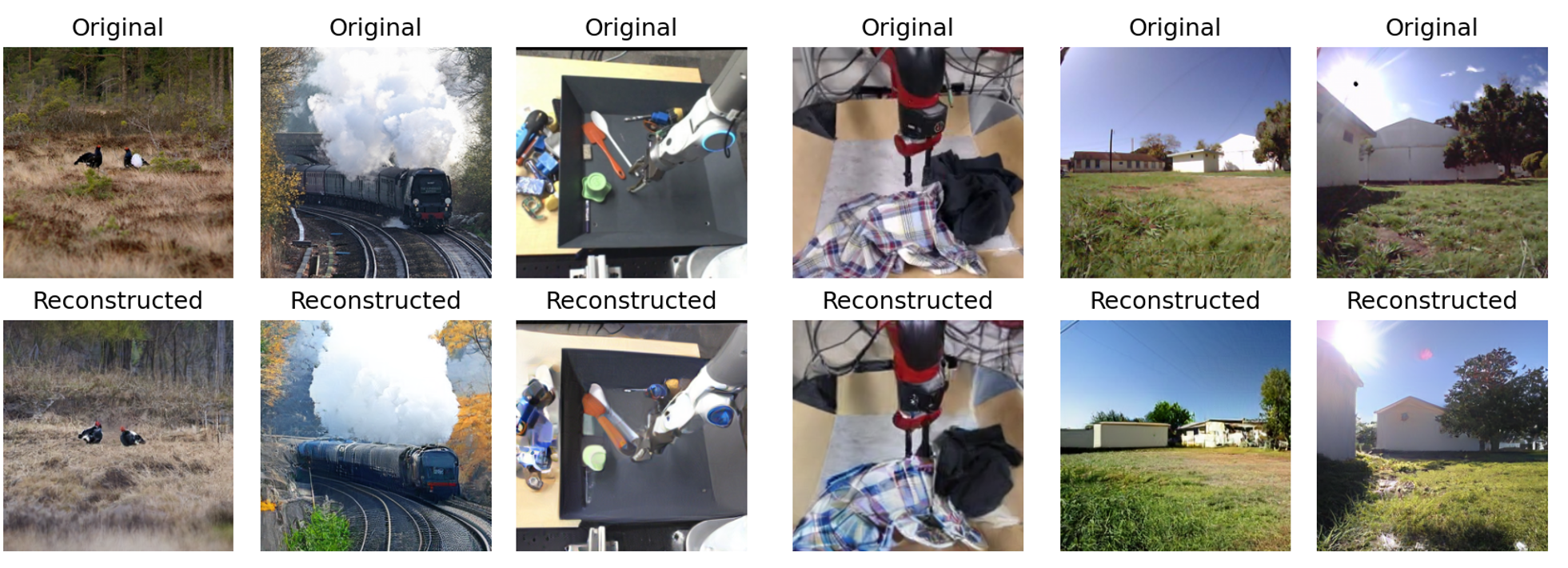}
    \vspace{-2mm}
    \caption{\textbf{Qualitative results of reconstruction with \modelname.}}
    \vspace{-4mm}
    \label{fig:qual_tok_recon}
\end{figure}

\begin{figure}[t!]
    \centering
    \includegraphics[width=0.45\textwidth]{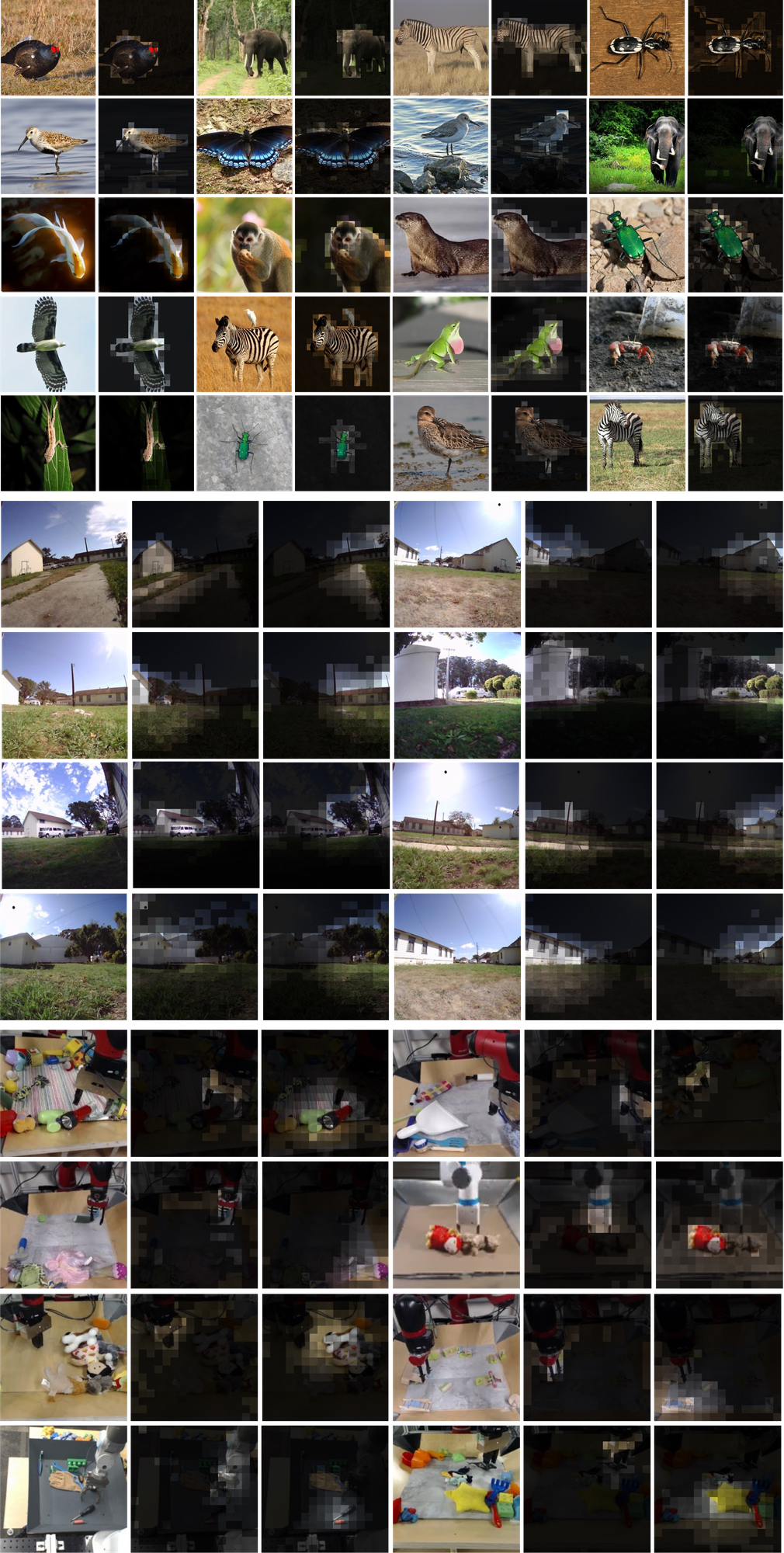}
    \vspace{-2mm}
    \caption{\textbf{Attention visualization for compact latent tokens in latent resampler.} Brighter the color, higher the attention score.}
    \vspace{-2mm}
    \label{fig:qual_tok_attn}
\end{figure}

\section{More qualitative results}
\label{sec:supp_qual}
Fig.~\ref{fig:qual_tok_recon} shows reconstruction examples across ImageNet, RECON, and RoboNet. While CompACT discards fine-grained textures and lighting details due to extreme compression, it preserves semantic content and spatial structure essential for planning tasks. 
Fig.~\ref{fig:qual_tok_attn} visualizes attention patterns in the latent resampler, demonstrating that each compact token attends to semantically coherent regions.
Fig.~\ref{fig:qual_plan_supp} presents example planning results with CompACT. While fine-grained details such as textures and shadows are synthesized rather than reconstructed, the rollouts accurately preserve planning-critical information: spatial layout, object positions, and scene structure necessary for effective goal-reaching.
Fig.~\ref{fig:qual_robonet} presents additional examples of action-conditioned video generation on RoboNet. Videos generated from CompACT latents maintain more consistent action-driven end-effector movements throughout the rollout compared to the ones generated with target tokenizer latents, validating that the modular latent tokens effectively capture dynamics-relevant information for manipulation tasks.

\section{Planning efficiency analysis}
\label{sec:supp_eff}
Fig.~\ref{fig:2d_plot} visualizes the trade-off between planning accuracy (ATE), planning latency, and model size across different tokenizers on RECON. Bubble size represents the peak VRAM usage during planning. 
CompACT variants achieve superior efficiency: CompACT delivers up to 80× speedup over SD-VAE while maintaining comparable accuracy. In contrast, FlexTok variants suffer significant accuracy loss and large VRAM requirements despite similar token counts, validating that our tokenizer design is critical for enabling efficient real-time planning.

\begin{figure}[h!]
    \centering
    \includegraphics[width=0.45\textwidth]{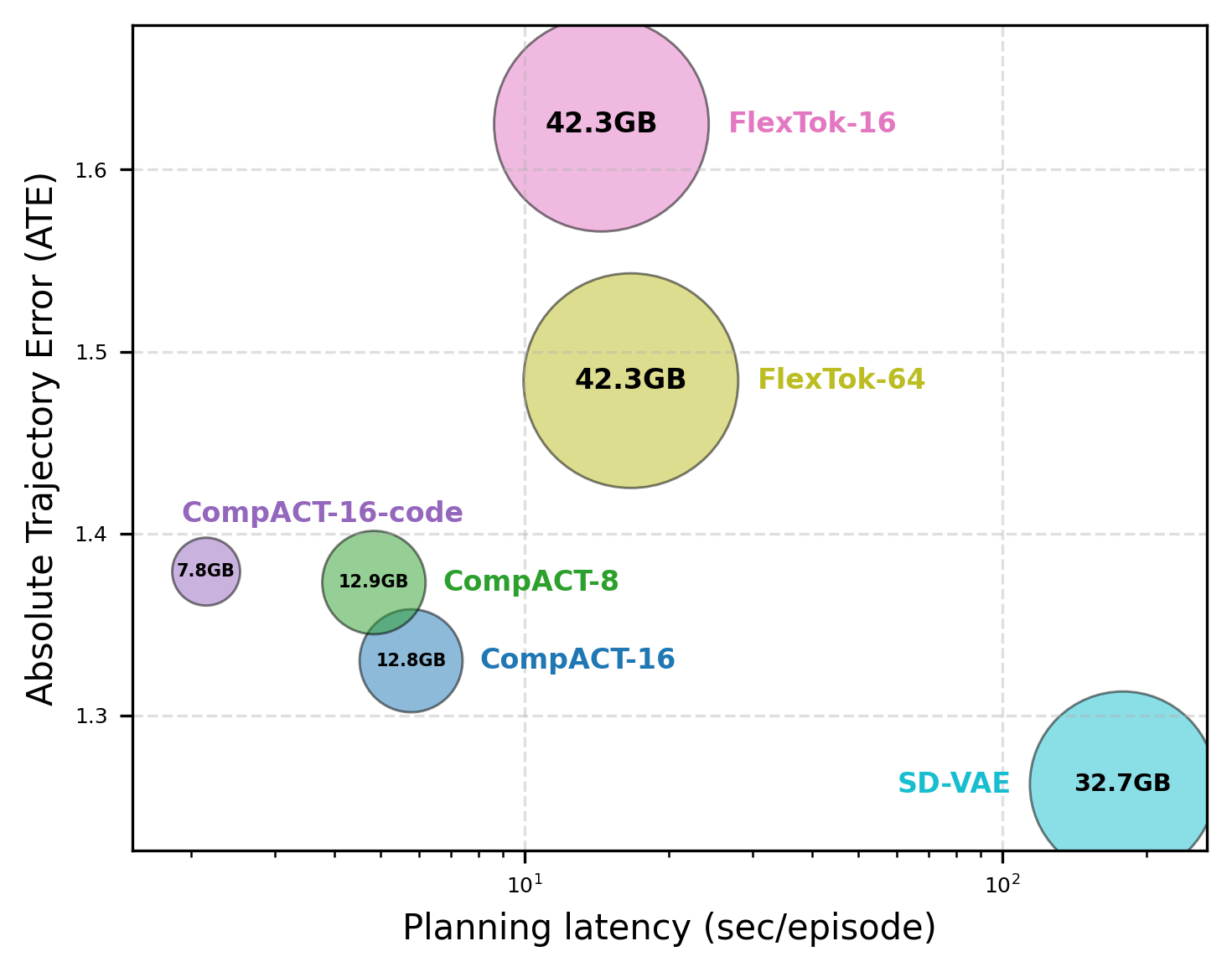}
    \vspace{-2mm}
    \caption{\textbf{Plot for ATE, planning latency, and memory peak usage on RECON~\cite{shah2021rapid}.} Latency and memory usage is mesaured for single trajectory optimization, using a single RTX 6000 ADA GPU.}
    \vspace{-4mm}
    \label{fig:2d_plot}
\end{figure}

\section{Scaling up with fewer tokens}
\label{sec:supp_scale_up}
The compact latent representation of CompACT enables scaling world models to larger capacities while maintaining practical planning latency. 
We train a 750M-parameter variant of our world model for navigation by increasing the depth to 24 layers and hidden dimension to 1024. 
This scaled model achieves improved planning accuracy with ATE of 1.305 and RPE of 0.370 on RECON---outperforming our base 16-token model (ATE=1.330, RPE=0.390). 
Planning latency is 24.7 seconds per trajectory, still 7$\times$ faster than the SD-VAE baseline (178.78 seconds). 

\clearpage
\begin{figure*}[t!]
    \centering
    \includegraphics[width=0.96\textwidth]{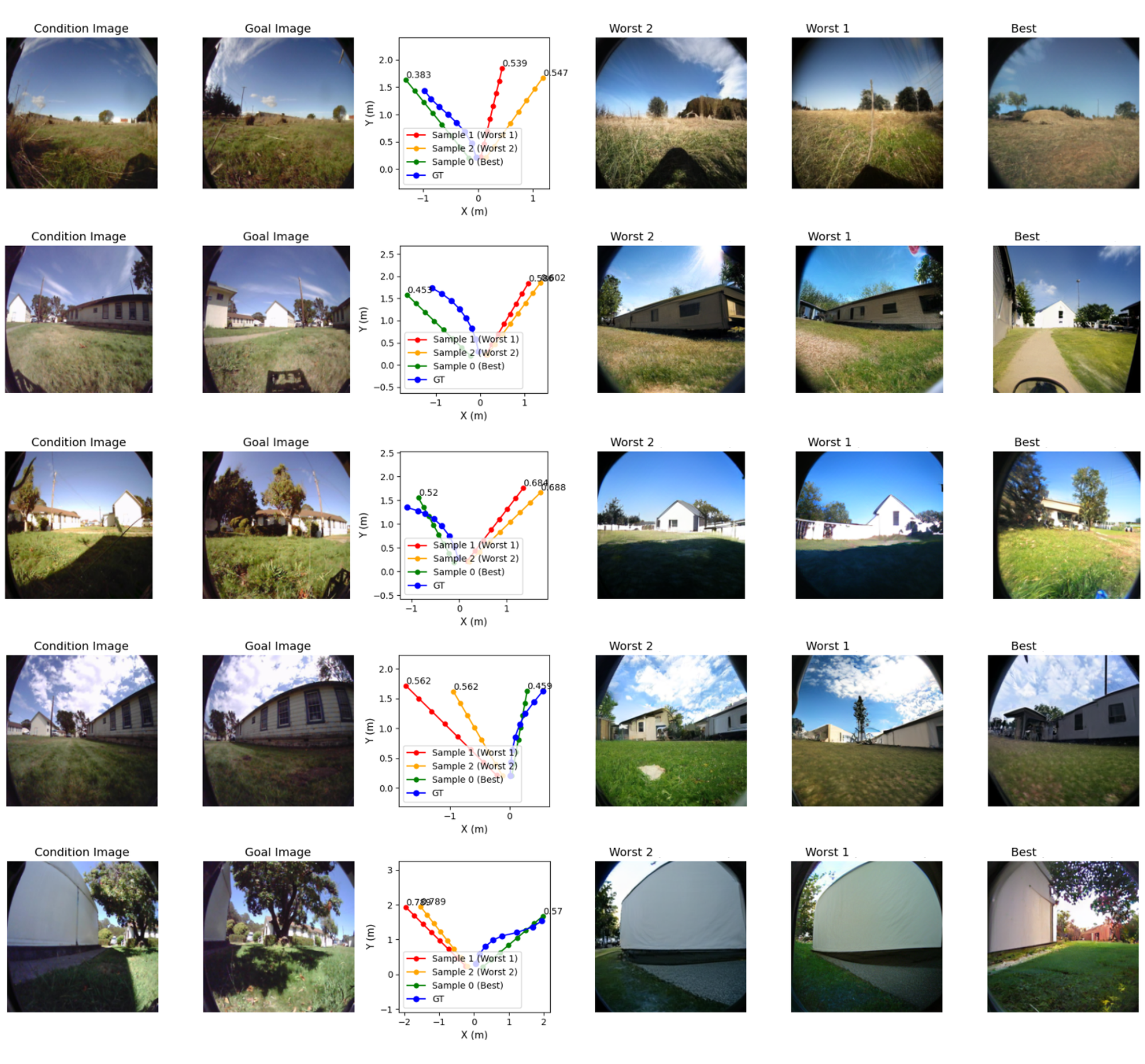}
    \caption{\textbf{Additional qualitative results of navigation planning with the proposed CompACT.} Among candidates, worst two action sequences and best action sequences are presented together.}
    \label{fig:qual_plan_supp}
\end{figure*}

\clearpage
\begin{figure*}[t!]
    \centering
    \includegraphics[height=0.96\textheight]{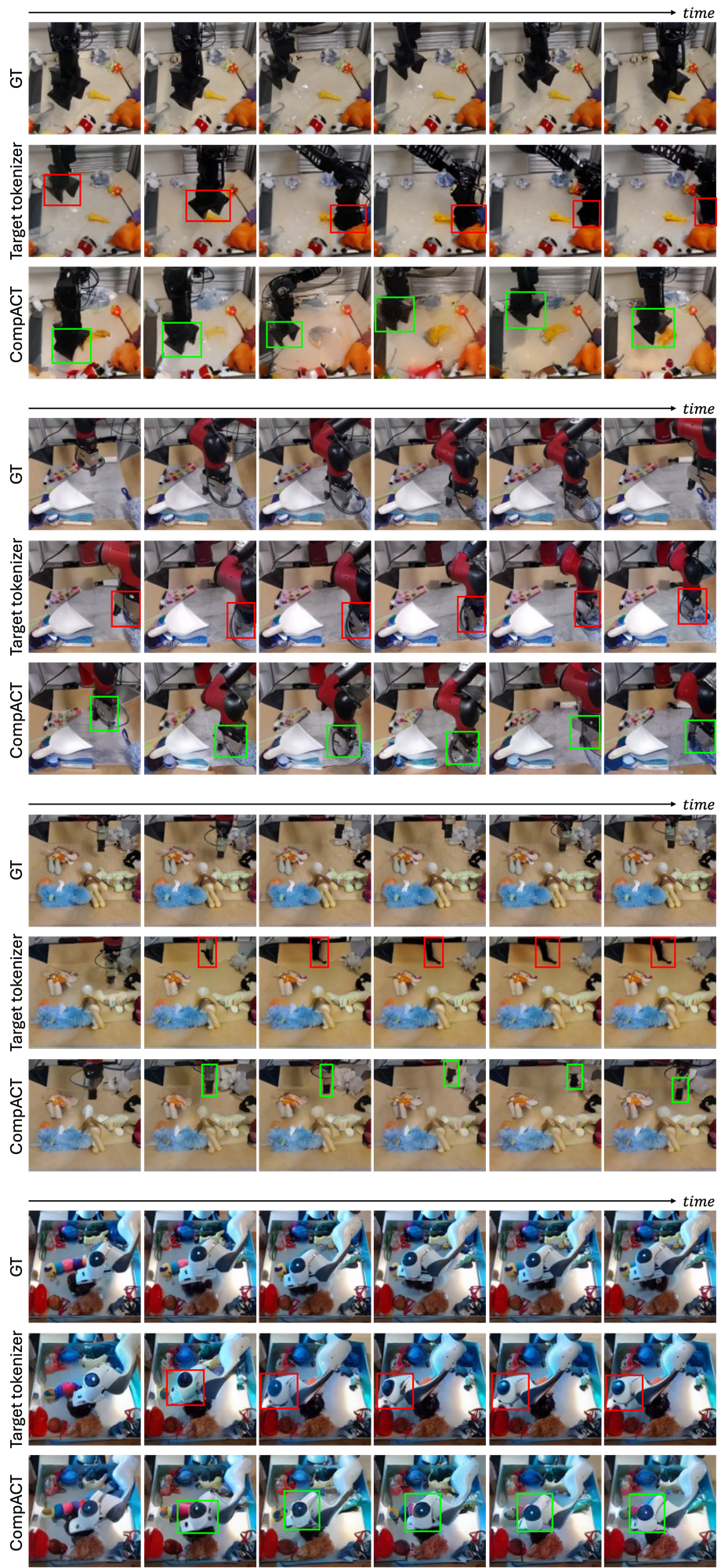}
    \caption{\textbf{Additional qualitative results of action-conditioned video generation.} Red and green boxes indicate incorrect and correct end-effector positions, respectively.}
    \label{fig:qual_robonet}
\end{figure*}

%% file: sec/supp/planning_suff.tex
\section{Planning sufficiency of compact tokens}
\label{sec:supp_planning_sufficiency}

How many bits must a latent representation $\vz$ carry so that planning in latent space is equivalent to planning in observation space?
We formalize this question using mutual information and derive a tight lower bound on the entropy of any planning-sufficient representation $\vz$.
Let $\vo$ denote an observation, $\va^{*}$ denote the optimal action (or action sequence), and $\vz = \mathcal{E}(\vo)$ denote the latent representation.

\begin{definition}[Planning sufficiency]
\label{def:planning_sufficiency}
A latent representation $\vz = \mathcal{E}(\vo)$ is \emph{planning-sufficient} if, for a optimal planning algorithm $\pi$, planning in latent space yields the same optimal action as planning in observation space: $\pi(\vz) = \pi(\vo)$.
This requires that $\vz$ retains all action-relevant information from $\vo$, i.e.,
\begin{equation}
I(\vz;\, \va^{*}) \;=\; I(\vo;\, \va^{*}).
\label{eq:planning_suff}
\end{equation}
\end{definition}

\begin{proposition}[Minimum description length for planning]
\label{prop:mdl_bound}
If the optimal planning algorithm $\pi$ is deterministic, i.e., $H(\va^{*}\mid\vo)=0$,
then a planning-sufficient representation $\vz$ 
(Def.~\ref{def:planning_sufficiency}) exists with minimum entropy
\begin{equation}
  \min_{\vz:\,\text{plan-suff.}} H(\vz) 
  \;=\; I(\vo;\,\va^{*}) \;=\; H(\va^{*}) 
  \;\ll\; H(\vo),
  \label{eq:mdl_tight}
\end{equation}
established by necessity (no planning-sufficient $\vz$ can have 
lower entropy) and achievability (a $\vz$ attaining this bound exists).
\end{proposition}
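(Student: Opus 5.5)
The plan is to prove the two halves separately, since the statement combines a necessity bound and an achievability construction. Throughout, $\vz=\mathcal{E}(\vo)$ is a deterministic function of $\vo$, and all variables are discrete, so Shannon entropies are finite and nonnegative.

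\emph{First, the identity $I(\vo;\va^{*})=H(\va^{*})$.} This is immediate from the hypothesis $H(\va^{*}\mid\vo)=0$:
\begin{equation*}
I(\vo;\va^{*}) = H(\va^{*})-H(\va^{*}\mid\vo) = H(\va^{*}).
\end{equation*}

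\emph{Necessity.} Let $\vz$ be any planning-sufficient representation. By Eq.~(\ref{eq:planning_suff}), $I(\vz;\va^{*})=I(\vo;\va^{*})=H(\va^{*})$. Since conditional entropy is nonnegative, $I(\vz;\va^{*})=H(\vz)-H(\vz\mid\va^{*})\le H(\vz)$. Hence $H(\vz)\ge H(\va^{*})$, so no planning-sufficient $\vz$ has lower entropy. It is also worth noting that $I(\vz;\va^{*})=H(\va^{*})$ forces $H(\va^{*}\mid\vz)=0$. In other words, $\va^{*}$ is a function of $\vz$, which matches the operational requirement $\pi(\vz)=\pi(\vo)$ in Def.~\ref{def:planning_sufficiency}.

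\emph{Achievability.} I would take $\mathcal{E}:=\pi$, i.e.\ $\vz=\va^{*}=\pi(\vo)$. This is a valid deterministic encoder precisely because $\pi$ is deterministic.
\begin{itemize}
\item \emph{Sufficiency.} We have $I(\vz;\va^{*})=I(\va^{*};\va^{*})=H(\va^{*})=I(\vo;\va^{*})$, so Eq.~(\ref{eq:planning_suff}) holds.
\item \emph{Operational condition.} Planning in latent space reduces to the identity map on $\vz$, which returns $\pi(\vo)$. I will state this as a convention: the latent planner may be any function of $\vz$, and an optimal one can read off $\va^{*}$.
\item \emph{Entropy.} $H(\vz)=H(\va^{*})$, so the lower bound from the necessity step is attained and the minimum equals $H(\va^{*})$.
\end{itemize}

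\emph{The final inequality $H(\va^{*})\ll H(\vo)$.} The non-strict part $H(\va^{*})\le H(\vo)$ follows from data processing, since $\va^{*}$ is a function of $\vo$. The ``$\ll$'' is not a theorem; I would justify it as a quantitative remark. The action space is low-dimensional: for CEM, three quantized scalars, so $H(\va^{*})$ is at most a few tens of bits. By contrast, natural images carry far more entropy. The 128-bit budget of \modelname then lies between these two scales.

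\emph{Main obstacle.} The delicate step is interpreting Def.~\ref{def:planning_sufficiency} consistently. The operational condition $\pi(\vz)=\pi(\vo)$ requires $\pi$ to act on latents, whereas the entropy condition in Eq.~(\ref{eq:planning_suff}) is purely information-theoretic. I would resolve this by adopting the mutual-information condition as the definition and showing equivalence under determinism, via $H(\va^{*}\mid\vz)=0$: this holds if and only if some function $g$ satisfies $g(\vz)=\va^{*}$ almost surely. I would also flag explicitly that ``$\ll$'' is an informal claim rather than part of the formal bound.
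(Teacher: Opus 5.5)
Your proposal is correct and matches the paper's proof essentially step for step: necessity from $I(\vz;\va^{*}) = H(\vz) - H(\vz\mid\va^{*}) \le H(\vz)$ combined with the sufficiency condition, and achievability via the encoder $\mathcal{E} := \pi$, which gives $H(\vz) = H(\va^{*}) = I(\vo;\va^{*})$. Your extra observations are accurate refinements the paper leaves implicit, not a different route: sufficiency forces $H(\va^{*}\mid\vz)=0$, only $H(\va^{*}) \le H(\vo)$ follows from data processing, and the $\ll$ is an informal claim.
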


\begin{proof}
\leavevmode
\begin{enumerate}[label=\textbf{(\alph*)},leftmargin=*,nosep]
\item \textbf{(Necessity)}\;
By the definition of mutual information, $I(\vz;\, \va^{*}) = H(\vz) - H(\vz \mid \va^{*}) \leq H(\vz)$.
Combining this with the planning-sufficiency condition $I(\vz;\,\va^{*}) = I(\vo;\,\va^{*})$ (Eq.~\ref{eq:planning_suff}) yields $H(\vz) \geq I(\vo;\, \va^{*})$.
\item \textbf{(Achievability)}\;
When $H(\va^{*}\mid\vo)=0$, define the deterministic encoder $\mathcal{E}(\vo) := \pi^{*}(\vo) = \va^*$.
Then $I(\vz;\,\va^{*}) = H(\va^{*}) = I(\vo;\,\va^{*})$, so $\vz$ is planning-sufficient, 
and $H(\vz) = H(\va^{*}) = I(\vo;\,\va^{*})$, achieving the bound. \qedhere
\end{enumerate}
\end{proof}

\noindent \textbf{Remark.}
The deterministic assumption $H(\va^{*}\mid\vo)=0$ refers to the existence of a unique optimal action for each observation, not to the planning procedure itself.
In practice, \modelname uses the Cross-Entropy Method (CEM), which employs stochastic sampling to search for this optimum; the stochasticity is in the optimizer, not in the underlying observation-to-action mapping.

\paragraph{Connection to \modelname.}
In robotic manipulation and navigation, action spaces are low-dimensional (e.g., 3--5 DOF), so $H(\va^{*}) \ll H(\vo)$.
Proposition~\ref{prop:mdl_bound} implies that a planning-sufficient tokenizer need not preserve the full observation entropy as in conventional autoencoder; the information-theoretic floor is at most $H(\va^{*})$ bits.
\modelname encodes each frame into 8--16 discrete tokens using FSQ with levels $[8,8,8,5,5,5]$, yielding $\approx\! 2^{16}$ codes per token ($\approx\! 16$ bits/token), for a total of 128--256 bits per frame.
Considering that previous work leveraged $\approx\!$ 300--400 bits to encode the whole image~\cite{yu2024image,kim2025democratizing} and $H(\va^{*}) \ll H(\vo)$, this is expected to far exceed the action-entropy bound, providing a comfortable margin for planning sufficiency.
Empirically, Tab.~\ref{tab:idm} confirms this: an inverse dynamics model trained on \modelname tokens (16 tokens, $R^2 = 0.716$) outperforms one trained on the baseline tokenizer using $16\times$ more tokens (256 tokens, $R^2 = 0.684$), indicating that the compact representation retains and even enhances the action-relevant information.

%% file: sec/supp/other_backbone.tex
\section{Cross-backbone ablation for \modelname}
\label{sec:supp_backbone}
\begin{table}[h]
\centering
\caption{\textbf{Cross-backbone ablation of \modelname.} Reconstruction performance of \modelname with 16 tokens on ImageNet validation split across different backbone choices.}
\resizebox{0.7\columnwidth}{!}{
\begin{tabular}{lccc}
\toprule
Backbone & SigLIP-2 & MAE & DINOv3\\
\midrule
rFID$\downarrow$ & \textbf{2.09} & 3.43 & 2.40 \\
\bottomrule
\end{tabular}
}
\label{tab:cross_backbone}
\end{table}
\noindent The main paper presents \modelname with DINOv3~\cite{simeoni2025dinov3} as the frozen encoder backbone.
To verify that our approach does not depend on properties specific to DINOv3, we replace it with two architecturally distinct vision foundation models (VFMs)---MAE~\cite{he2022masked}, trained via masked image modeling, and SigLIP-2~\cite{tschannen2025siglip}, trained via vision-language pretraining---while keeping all other components unchanged.
As shown in Tab.~\ref{tab:cross_backbone}, all three backbones yield competitive rFID, with SigLIP-2 even surpassing DINOv3.
This result confirms that \modelname is not tied to any particular backbone's latent space; rather, it leverages the high-level semantic structure shared across strong VFMs---the planning-relevant information our tokenizer is designed to preserve.

%% file: sec/supp/latency_breakdown.tex
\section{Planning latency breakdown}
\label{sec:supp_latency_breakdown}

\begin{figure}[t]
    \centering
    \includegraphics[width=0.95\columnwidth]{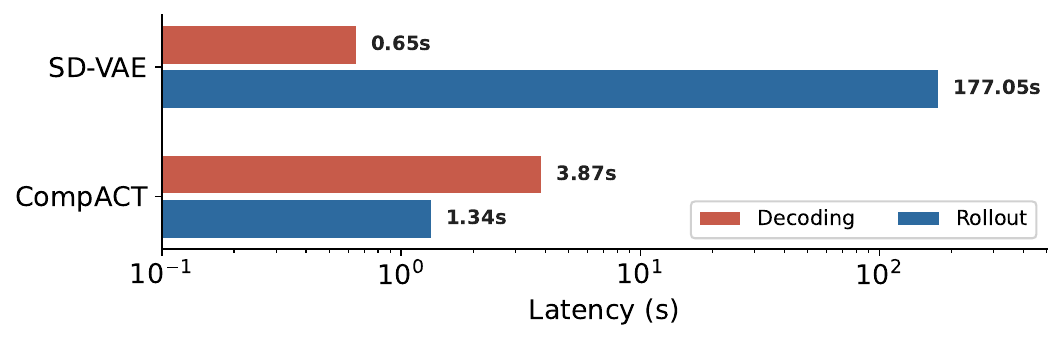}
    \vspace{-2mm}
    \caption{\textbf{Planning latency breakdown.} Comparison of rollout and decoding latency between SD-VAE and \modelname on a single RTX 6000 ADA GPU. Note the log scale on the $x$-axis.}
    \label{fig:latency_breakdown}
\end{figure}

Fig.~\ref{fig:latency_breakdown} two major components of the planning latency for a single trajectory optimization into two components: world model \textit{rollout} (forward passes through the world model) and \textit{decoding} (converting latent tokens back to images for cost evaluation).
The rollout cost dominates SD-VAE's total planning time (177.05s out of 177.70s), as the world model must process 784 tokens per frame across 1,920 forward passes during MPC.
\modelname reduces this bottleneck by 99.2\% (177.05s $\rightarrow$ 1.34s) by operating on only 16 discrete tokens per frame.
While our generative decoder is $\sim$6$\times$ slower than SD-VAE's single-step decoder (3.87s vs.\ 0.65s), decoding accounts for only a small fraction of the total planning budget and is invoked far fewer times than rollout (240 vs.\ 1,920 calls).
Furthermore, when using latent-space cost functions (Tab.~5 in the main paper), decoding is bypassed entirely, yielding even greater speedups.

%% file: sec/supp/robomimic_closed_loop.tex
\section{Closed-loop robot arm manipulation}
\label{sec:supp_rm_closed_loop}
\begin{table}[t]
\centering
\small
\caption{\textbf{Closed-loop manipulation on RoboMimic Lift.}}
\label{tab:robomimic}
\resizebox{0.8\columnwidth}{!}{
\begin{tabular}{lcc}
\toprule
Tokenizer & SR $\uparrow$ & Steps $\downarrow$ \\
\midrule
Target tokenizer ($\mathcal{D}_{\psi} \circ \mathcal{E}_{\psi}$)~\cite{chang2022maskgit} & 56\% & 66.8 \\
CompACT (ours) & \textbf{56\%} & \textbf{55.1} \\
\bottomrule
\end{tabular}
}
\end{table}

To evaluate CompACT beyond open-loop video prediction, we conduct a preliminary closed-loop manipulation experiment on the Lift task from RoboMimic~\cite{robomimic2021}. 
We adopt a hierarchical formulation: the world model plans in CompACT's latent space, and the IDM decodes inter-frame actions for execution. 
For detailed implementation, we refer to Sec.~\ref{sec:supp_wm_arch_mani} and Sec.~\ref{sec:supp_idm}, respectively.

As shown in Tab.~\ref{tab:robomimic}, CompACT (16 tokens) matches the target tokenizer (256 tokens) in success rate while requiring 17\% fewer action steps, suggesting that semantic-level latent plans yield more direct paths to the goal. 
This result provides preliminary evidence that planning-sufficient compression extends to contact-rich manipulation tasks. 
We note this is a minimal setup; incorporating real-time observations and proprioception into the IDM could further improve performance, which we leave for future work.

%% file: main.bib
@string{AAAI = {Proc. AAAI Conference on Artificial Intelligence (AAAI)}}

@string{CVPR = {Proc. IEEE Conference on Computer Vision and Pattern Recognition (CVPR)}}

@string{ECCV = {Proc. European Conference on Computer Vision (ECCV)}}

@string{ICCV = {Proc. IEEE International Conference on Computer Vision (ICCV)}}

@string{ICLR = {Proc. International Conference on Learning Representations (ICLR)}}

@string{ICML = {Proc. International Conference on Machine Learning (ICML)}}

@string{NIPS = {Proc. Neural Information Processing Systems (NeurIPS)}}

@string{IROS = {Proc. IEEE/RSJ International Conference on Intelligent Robots and Systems (IROS)}}

@string{CoRL = {Proc. Annual Conference on Robot Learning (CoRL)}}

@string{RSS = {Robotics: Science and Systems (RSS)}}

@string{TMLR = {Transactions on Machine Learning Research (TMLR)}}

@article{gao2025one,
  title={One Layer Is Enough: Adapting Pretrained Visual Encoders for Image Generation},
  author={Gao, Yuan and Chen, Chen and Chen, Tianrong and Gu, Jiatao},
  journal={arXiv preprint arXiv:2512.07829},
  year={2025}
}

@inproceedings{zheng2025diffusion,
  title={Diffusion transformers with representation autoencoders},
  author={Zheng, Boyang and Ma, Nanye and Tong, Shengbang and Xie, Saining},
  booktitle=ICLR,
  year={2026}
}

@inproceedings{wu2024ivideogpt,
    title={iVideoGPT: Interactive VideoGPTs are Scalable World Models}, 
    author={Jialong Wu and Shaofeng Yin and Ningya Feng and Xu He and Dong Li and Jianye Hao and Mingsheng Long},
    booktitle=NIPS,
    year={2024},
}

@article{tschannen2025siglip,
  title={SigLIP 2: Multilingual Vision-Language Encoders with Improved Semantic Understanding, Localization, and Dense Features},
  author={Tschannen, Michael and Gritsenko, Alexey and Wang, Xiao and Naeem, Muhammad Ferjad and Alabdulmohsin, Ibrahim and Parthasarathy, Nikhil and Evans, Talfan and Beyer, Lucas and Xia, Ye and Mustafa, Basil and H\'enaff, Olivier and Harmsen, Jeremiah and Steiner, Andreas and Zhai, Xiaohua},
  year={2025},
  journal={arXiv preprint arXiv:2502.14786}
}

@inproceedings{robomimic2021,
  title={What Matters in Learning from Offline Human Demonstrations for Robot Manipulation},
  author={Ajay Mandlekar and Danfei Xu and Josiah Wong and Soroush Nasiriany and Chen Wang and Rohun Kulkarni and Li Fei-Fei and Silvio Savarese and Yuke Zhu and Roberto Mart\'{i}n-Mart\'{i}n},
  booktitle=CoRL,
  year={2021}
}

@inproceedings{sturm2012evaluating,
  title={Evaluating egomotion and structure-from-motion approaches using the TUM RGB-D benchmark},
  author={Sturm, J{\"u}rgen and Burgard, Wolfram and Cremers, Daniel},
  booktitle={Proc. of the Workshop on Color-Depth Camera Fusion in Robotics at the IEEE/RJS International Conference on Intelligent Robot Systems (IROS)},
  volume={13},
  pages={6},
  year={2012}
}

@inproceedings{grauman2022ego4d,
  title={Ego4d: Around the world in 3,000 hours of egocentric video},
  author={Grauman, Kristen and Westbury, Andrew and Byrne, Eugene and Chavis, Zachary and Furnari, Antonino and Girdhar, Rohit and Hamburger, Jackson and Jiang, Hao and Liu, Miao and Liu, Xingyu and others},
  booktitle=CVPR,
  pages={18995--19012},
  year={2022}
}

@inproceedings{wang2020tartanair,
  title={Tartanair: A dataset to push the limits of visual slam},
  author={Wang, Wenshan and Zhu, Delong and Wang, Xiangwei and Hu, Yaoyu and Qiu, Yuheng and Wang, Chen and Hu, Yafei and Kapoor, Ashish and Scherer, Sebastian},
  booktitle={IEEE/RSJ International Conference on Intelligent Robots and Systems (IROS)},
  pages={4909--4916},
  year={2020},
  organization={IEEE}
}

@inproceedings{ho2020denoising,
  title={Denoising diffusion probabilistic models},
  author={Ho, Jonathan and Jain, Ajay and Abbeel, Pieter},
  booktitle=NIPS,
  volume={33},
  pages={6840--6851},
  year={2020}
}

@inproceedings{nichol2021improved,
  title={Improved denoising diffusion probabilistic models},
  author={Nichol, Alexander Quinn and Dhariwal, Prafulla},
  booktitle=ICML,
  pages={8162--8171},
  year={2021}
}

@inproceedings{chi2023diffusionpolicy,
	title={Diffusion Policy: Visuomotor Policy Learning via Action Diffusion},
	author={Chi, Cheng and Feng, Siyuan and Du, Yilun and Xu, Zhenjia and Cousineau, Eric and Burchfiel, Benjamin and Song, Shuran},
	booktitle={Proceedings of Robotics: Science and Systems (RSS)},
	year={2023}
}

@inproceedings{perez2018film,
  title={Film: Visual reasoning with a general conditioning layer},
  author={Perez, Ethan and Strub, Florian and De Vries, Harm and Dumoulin, Vincent and Courville, Aaron},
  booktitle=AAAI,
  volume={32},
  number={1},
  year={2018}
}

@article{hendrycks2016gaussian,
  title={Gaussian Error Linear Units (Gelus)},
  author={Hendrycks, D},
  journal={arXiv preprint arXiv:1606.08415},
  year={2016}
}

@inproceedings{loshchilov2017decoupled,
  title={Decoupled weight decay regularization},
  author={Loshchilov, Ilya and Hutter, Frank},
  booktitle=ICLR,
  year={2019}
}

@inproceedings{jaegle2021perceiver,
  title={Perceiver: General perception with iterative attention},
  author={Jaegle, Andrew and Gimeno, Felix and Brock, Andy and Vinyals, Oriol and Zisserman, Andrew and Carreira, Joao},
  booktitle=ICML,
  pages={4651--4664},
  year={2021}
}

@inproceedings{carion2020end,
  title={End-to-end object detection with transformers},
  author={Carion, Nicolas and Massa, Francisco and Synnaeve, Gabriel and Usunier, Nicolas and Kirillov, Alexander and Zagoruyko, Sergey},
  booktitle=ECCV,
  pages={213--229},
  year={2020},
  organization={Springer}
}

@inproceedings{chen2024diffusion,
  title={Diffusion forcing: Next-token prediction meets full-sequence diffusion},
  author={Chen, Boyuan and Mart{\'\i} Mons{\'o}, Diego and Du, Yilun and Simchowitz, Max and Tedrake, Russ and Sitzmann, Vincent},
  booktitle=NIPS,
  volume={37},
  pages={24081--24125},
  year={2024}
}

@inproceedings{kim2025democratizing,
  title={Democratizing text-to-image masked generative models with compact text-aware one-dimensional tokens},
  author={Kim, Dongwon and He, Ju and Yu, Qihang and Yang, Chenglin and Shen, Xiaohui and Kwak, Suha and Chen, Liang-Chieh},
  booktitle=ICCV,
  year={2025}
}

@article{hirose2023sacson,
  title={Sacson: Scalable autonomous control for social navigation},
  author={Hirose, Noriaki and Shah, Dhruv and Sridhar, Ajay and Levine, Sergey},
  journal={IEEE Robotics and Automation Letters},
  volume={9},
  number={1},
  pages={49--56},
  year={2023},
  publisher={IEEE}
}

@article{karnan2022socially,
  title={Socially compliant navigation dataset (scand): A large-scale dataset of demonstrations for social navigation},
  author={Karnan, Haresh and Nair, Anirudh and Xiao, Xuesu and Warnell, Garrett and Pirk, S{\"o}ren and Toshev, Alexander and Hart, Justin and Biswas, Joydeep and Stone, Peter},
  journal={IEEE Robotics and Automation Letters},
  volume={7},
  number={4},
  pages={11807--11814},
  year={2022},
  publisher={IEEE}
}

@inproceedings{zhu2024irasim,
  title={Irasim: Learning interactive real-robot action simulators},
  author={Zhu, Fangqi and Wu, Hongtao and Guo, Song and Liu, Yuxiao and Cheang, Chilam and Kong, Tao},
  booktitle=ICCV,
  year=2025
}

@inproceedings{du2023video,
  title={Video Language Planning},
  author={Du, Yilun and Yang, Mengjiao and Florence, Pete and Xia, Fei and Wahid, Ayzaan and Ichter, Brian and Sermanet, Pierre and Yu, Tianhe and Abbeel, Pieter and Tenenbaum, Joshua B and others},
  booktitle=ICLR,
  year={2024}
}

@inproceedings{ko2023learning,
  title={Learning to act from actionless videos through dense correspondences},
  author={Ko, Po-Chen and Mao, Jiayuan and Du, Yilun and Sun, Shao-Hua and Tenenbaum, Joshua B},
  booktitle=ICLR,
  year={2024}
}

@article{hansen2023td,
  title={Td-mpc2: Scalable, robust world models for continuous control},
  author={Hansen, Nicklas and Su, Hao and Wang, Xiaolong},
  journal={arXiv preprint arXiv:2310.16828},
  year={2023}
}

@inproceedings{micheli2022transformers,
  title={Transformers are sample-efficient world models},
  author={Micheli, Vincent and Alonso, Eloi and Fleuret, Fran{\c{c}}ois},
  booktitle=ICLR,
  year={2023}
}

@inproceedings{hafner2020mastering,
  title={Mastering atari with discrete world models},
  author={Hafner, Danijar and Lillicrap, Timothy and Norouzi, Mohammad and Ba, Jimmy},
  booktitle=ICLR,
  year={2021}
}

@inproceedings{zhou2024dino,
  title={Dino-wm: World models on pre-trained visual features enable zero-shot planning},
  author={Zhou, Gaoyue and Pan, Hengkai and LeCun, Yann and Pinto, Lerrel},
  booktitle=ICML,
  year=2025
}

@article{forrester1971counterintuitive,
  title={Counterintuitive behavior of social systems},
  author={Forrester, Jay W},
  journal={Theory and decision},
  volume={2},
  number={2},
  pages={109--140},
  year={1971},
  publisher={Springer}
}

@article{de2005cem,
  title={A tutorial on the cross-entropy method},
  author={De Boer, Pieter-Tjerk and Kroese, Dirk P and Mannor, Shie and Rubinstein, Reuven Y},
  journal={Annals of operations research},
  volume={134},
  number={1},
  pages={19--67},
  year={2005},
  publisher={Springer}
}

@inproceedings{chua2018cem,
  title={Deep reinforcement learning in a handful of trials using probabilistic dynamics models},
  author={Chua, Kurtland and Calandra, Roberto and McAllister, Rowan and Levine, Sergey},
  booktitle=NIPS,
  volume={31},
  year={2018}
}

@article{van2017neural,
  title={Neural discrete representation learning},
  author={Van Den Oord, Aaron and Vinyals, Oriol and others},
  journal={Advances in neural information processing systems},
  volume={30},
  year={2017}
}

@inproceedings{esser2021taming,
  title={Taming transformers for high-resolution image synthesis},
  author={Esser, Patrick and Rombach, Robin and Ommer, Bjorn},
  booktitle=CVPR,
  pages={12873--12883},
  year={2021}
}

@inproceedings{rombach2022high,
  title={High-resolution image synthesis with latent diffusion models},
  author={Rombach, Robin and Blattmann, Andreas and Lorenz, Dominik and Esser, Patrick and Ommer, Bj{\"o}rn},
  booktitle=NIPS,
  pages={10684--10695},
  year={2022}
}

@inproceedings{yu2021vector,
  title={Vector-quantized image modeling with improved vqgan},
  author={Yu, Jiahui and Li, Xin and Koh, Jing Yu and Zhang, Han and Pang, Ruoming and Qin, James and Ku, Alexander and Xu, Yuanzhong and Baldridge, Jason and Wu, Yonghui},
  booktitle=ICLR,
  year={2022}
}

@inproceedings{lee2022autoregressive,
  title={Autoregressive image generation using residual quantization},
  author={Lee, Doyup and Kim, Chiheon and Kim, Saehoon and Cho, Minsu and Han, Wook-Shin},
  booktitle=CVPR,
  pages={11523--11532},
  year={2022}
}

@inproceedings{yu2023language,
  title={Language Model Beats Diffusion--Tokenizer is Key to Visual Generation},
  author={Yu, Lijun and Lezama, Jos{\'e} and Gundavarapu, Nitesh B and Versari, Luca and Sohn, Kihyuk and Minnen, David and Cheng, Yong and Birodkar, Vighnesh and Gupta, Agrim and Gu, Xiuye and others},
  booktitle=ICLR,
  year=2024
}

@inproceedings{mentzer2023finite,
  title={Finite scalar quantization: Vq-vae made simple},
  author={Mentzer, Fabian and Minnen, David and Agustsson, Eirikur and Tschannen, Michael},
  booktitle=ICLR,
  year={2024}
}

@inproceedings{yu2024image,
  title={An image is worth 32 tokens for reconstruction and generation},
  author={Yu, Qihang and Weber, Mark and Deng, Xueqing and Shen, Xiaohui and Cremers, Daniel and Chen, Liang-Chieh},
  booktitle=NIPS,
  volume={37},
  pages={128940--128966},
  year={2024}
}

@inproceedings{bachmann2025flextok,
  title={FlexTok: Resampling Images into 1D Token Sequences of Flexible Length},
  author={Bachmann, Roman and Allardice, Jesse and Mizrahi, David and Fini, Enrico and Kar, O{\u{g}}uzhan Fatih and Amirloo, Elmira and El-Nouby, Alaaeldin and Zamir, Amir and Dehghan, Afshin},
  booktitle=ICML,
  year={2025}
}

@inproceedings{deng2009imagenet,
  title={Imagenet: A large-scale hierarchical image database},
  author={Deng, Jia and Dong, Wei and Socher, Richard and Li, Li-Jia and Li, Kai and Fei-Fei, Li},
  booktitle = CVPR,
  year={2009}
}

@article{sun2024autoregressive,
  title={Autoregressive Model Beats Diffusion: Llama for Scalable Image Generation},
  author={Sun, Peize and Jiang, Yi and Chen, Shoufa and Zhang, Shilong and Peng, Bingyue and Luo, Ping and Yuan, Zehuan},
  journal={arXiv preprint arXiv:2406.06525},
  year={2024}
}

@inproceedings{peebles2023scalable,
  title={Scalable diffusion models with transformers},
  author={Peebles, William and Xie, Saining},
  booktitle=ICCV,
  year={2023}
}

@article{li2024autoregressive,
  title={Autoregressive Image Generation without Vector Quantization},
  author={Li, Tianhong and Tian, Yonglong and Li, He and Deng, Mingyang and He, Kaiming},
  journal=NIPS,
  year={2024}
}

@inproceedings{weber2024maskbit,
  title={MaskBit: Embedding-free Image Generation via Bit Tokens},
  author={Weber, Mark and Yu, Lijun and Yu, Qihang and Deng, Xueqing and Shen, Xiaohui and Cremers, Daniel and Chen, Liang-Chieh},
  booktitle=TMLR,
  year={2024}
}

@inproceedings{chang2023muse,
  title={Muse: Text-to-image generation via masked generative transformers},
  author={Chang, Huiwen and Zhang, Han and Barber, Jarred and Maschinot, AJ and Lezama, Jos{\'e} and Jiang, Lu and Yang, Ming-Hsuan and Murphy, Kevin and Freeman, William T and Rubinstein, Michael and others},
  booktitle=ICML,
  year={2023}
}

@inproceedings{li2023mage,
  title={Mage: Masked generative encoder to unify representation learning and image synthesis},
  author={Li, Tianhong and Chang, Huiwen and Mishra, Shlok and Zhang, Han and Katabi, Dina and Krishnan, Dilip},
  booktitle=CVPR,
  year={2023}
}

@inproceedings{fan2024fluid,
  title={Fluid: Scaling Autoregressive Text-to-image Generative Models with Continuous Tokens},
  author={Fan, Lijie and Li, Tianhong and Qin, Siyang and Li, Yuanzhen and Sun, Chen and Rubinstein, Michael and Sun, Deqing and He, Kaiming and Tian, Yonglong},
  booktitle=ICLR,
  year={2025}
}

@inproceedings{johnson2016perceptual,
  title={Perceptual losses for real-time style transfer and super-resolution},
  author={Johnson, Justin and Alahi, Alexandre and Fei-Fei, Li},
  booktitle=ECCV,
  year={2016}
}

@inproceedings{esser2024scaling,
  title={Scaling rectified flow transformers for high-resolution image synthesis},
  author={Esser, Patrick and Kulal, Sumith and Blattmann, Andreas and Entezari, Rahim and M{\"u}ller, Jonas and Saini, Harry and Levi, Yam and Lorenz, Dominik and Sauer, Axel and Boesel, Frederic and others},
  booktitle=ICML,
  year={2024}
}

@article{gpt3,
  title={Language Models are Few-Shot Learners},
  author={Brown, Tom B. and Mann, Benjamin and Ryder, Nick and Subbiah, Melanie and Kaplan, Jared and Dhariwal, Prafulla and Neelakantan, Arvind and Shyam, Pranav and Sastry, Girish and Askell, Amanda and others},
  journal=NIPS,
  year={2020},
  url={https://proceedings.neurips.cc/paper/2020/file/1457c0d6bfcb4967418bfb8ac142f64a-Paper.pdf},
}

@inproceedings{chang2022maskgit,
  title={Maskgit: Masked generative image transformer},
  author={Chang, Huiwen and Zhang, Han and Jiang, Lu and Liu, Ce and Freeman, William T},
  booktitle=CVPR,
  year={2022}
}

@article{vaswani2017attention,
  title={Attention is all you need},
  author={Vaswani, Ashish and Shazeer, Noam and Parmar, Niki and Uszkoreit, Jakob and Jones, Llion and Gomez, Aidan N and Kaiser, {\L}ukasz and Polosukhin, Illia},
  journal=NIPS,
  year={2017}
}

@inproceedings{he2022masked,
  title={Masked autoencoders are scalable vision learners},
  author={He, Kaiming and Chen, Xinlei and Xie, Saining and Li, Yanghao and Doll{\'a}r, Piotr and Girshick, Ross},
  booktitle=CVPR,
  pages={16000--16009},
  year={2022}
}

@inproceedings{chen2020generative,
  title={Generative pretraining from pixels},
  author={Chen, Mark and Radford, Alec and Child, Rewon and Wu, Jeffrey and Jun, Heewoo and Luan, David and Sutskever, Ilya},
  booktitle=ICML,
  year={2020}
}

@article{salimans2016improved,
  title={Improved techniques for training gans},
  author={Salimans, Tim and Goodfellow, Ian and Zaremba, Wojciech and Cheung, Vicki and Radford, Alec and Chen, Xi},
  journal=NIPS,
  year={2016}
}

@article{miwa2025one,
  title={One-d-piece: Image tokenizer meets quality-controllable compression},
  author={Miwa, Keita and Sasaki, Kento and Arai, Hidehisa and Takahashi, Tsubasa and Yamaguchi, Yu},
  journal={arXiv preprint arXiv:2501.10064},
  year={2025}
}

@article{gao2023mdtv2,
  title={Mdtv2: Masked diffusion transformer is a strong image synthesizer},
  author={Gao, Shanghua and Zhou, Pan and Cheng, Ming-Ming and Yan, Shuicheng},
  journal={arXiv preprint arXiv:2303.14389},
  year={2023}
}

@inproceedings{tang2024hart,
  title={Hart: Efficient visual generation with hybrid autoregressive transformer},
  author={Tang, Haotian and Wu, Yecheng and Yang, Shang and Xie, Enze and Chen, Junsong and Chen, Junyu and Zhang, Zhuoyang and Cai, Han and Lu, Yao and Han, Song},
  booktitle=ICLR,
  year={2025}
}

@inproceedings{zheng2022movq,
  title={Movq: Modulating quantized vectors for high-fidelity image generation},
  author={Zheng, Chuanxia and Vuong, Tung-Long and Cai, Jianfei and Phung, Dinh},
  booktitle=NIPS,
  volume={35},
  pages={23412--23425},
  year={2022}
}

@inproceedings{hafner2019dream,
  title={Dream to control: Learning behaviors by latent imagination},
  author={Hafner, Danijar and Lillicrap, Timothy and Ba, Jimmy and Norouzi, Mohammad},
  booktitle=ICLR,
  year={2020}
}

@article{hafner2023mastering,
  title={Mastering diverse domains through world models},
  author={Hafner, Danijar and Pasukonis, Jurgis and Ba, Jimmy and Lillicrap, Timothy},
  journal={Nature},
  pages={1--7},
  year={2025},
  publisher={Nature Publishing Group}
}

@inproceedings{bar2025navigation,
  title={Navigation world models},
  author={Bar, Amir and Zhou, Gaoyue and Tran, Danny and Darrell, Trevor and LeCun, Yann},
  booktitle=CVPR,
  pages={15791--15801},
  year={2025}
}

@article{ha2018world,
  title={World models},
  author={Ha, David and Schmidhuber, J{\"u}rgen},
  journal={arXiv preprint arXiv:1803.10122},
  volume={2},
  number={3},
  year={2018}
}

@inproceedings{koh2021pathdreamer,
  title={Pathdreamer: A world model for indoor navigation},
  author={Koh, Jing Yu and Lee, Honglak and Yang, Yinfei and Baldridge, Jason and Anderson, Peter},
  booktitle=ICCV,
  pages={14738--14748},
  year={2021}
}

@inproceedings{
shah2021rapid,
title={Rapid Exploration for Open-World Navigation with Latent Goal Models},
author={Dhruv Shah and Benjamin Eysenbach and Nicholas Rhinehart and Sergey Levine},
booktitle={Annual Conference on Robot Learning (CoRL)},
year={2021},
url={https://openreview.net/forum?id=d_SWJhyKfVw}
}

@inproceedings{bruce2024genie,
  title={Genie: Generative interactive environments},
  author={Bruce, Jake and Dennis, Michael D and Edwards, Ashley and Parker-Holder, Jack and Shi, Yuge and Hughes, Edward and Lai, Matthew and Mavalankar, Aditi and Steigerwald, Richie and Apps, Chris and others},
  booktitle=ICML,
  year={2024}
}

@inproceedings{alonso2024diffusion,
  title={Diffusion for world modeling: Visual details matter in atari},
  author={Alonso, Eloi and Jelley, Adam and Micheli, Vincent and Kanervisto, Anssi and Storkey, Amos J and Pearce, Tim and Fleuret, Fran{\c{c}}ois},
  booktitle=NIPS,
  volume={37},
  pages={58757--58791},
  year={2024}
}

@inproceedings{valevski2024diffusion,
  title={Diffusion models are real-time game engines},
  author={Valevski, Dani and Leviathan, Yaniv and Arar, Moab and Fruchter, Shlomi},
  booktitle=ICLR,
  year={2025}
}

@inproceedings{yang2023learning,
  title={Learning interactive real-world simulators},
  author={Yang, Mengjiao and Du, Yilun and Ghasemipour, Kamyar and Tompson, Jonathan and Schuurmans, Dale and Abbeel, Pieter},
  booktitle=ICLR,
  volume={1},
  number={2},
  pages={6},
  year={2024}
}

@inproceedings{mendonca2023structured,
  title={Structured world models from human videos},
  author={Mendonca, Russell and Bahl, Shikhar and Pathak, Deepak},
  booktitle=RSS,
  year={2023}
}

@article{hu2023gaia,
  title={Gaia-1: A generative world model for autonomous driving},
  author={Hu, Anthony and Russell, Lloyd and Yeo, Hudson and Murez, Zak and Fedoseev, George and Kendall, Alex and Shotton, Jamie and Corrado, Gianluca},
  journal={arXiv preprint arXiv:2309.17080},
  year={2023}
}

@article{gao2024vista,
  title={Vista: A generalizable driving world model with high fidelity and versatile controllability},
  author={Gao, Shenyuan and Yang, Jiazhi and Chen, Li and Chitta, Kashyap and Qiu, Yihang and Geiger, Andreas and Zhang, Jun and Li, Hongyang},
  journal={Advances in Neural Information Processing Systems},
  volume={37},
  pages={91560--91596},
  year={2024}
}

@inproceedings{nie2025wmnav,
  title={Wmnav: Integrating vision-language models into world models for object goal navigation},
  author={Nie, Dujun and Guo, Xianda and Duan, Yiqun and Zhang, Ruijun and Chen, Long},
  booktitle=IROS,
  year={2025}
}

@inproceedings{yao2025navmorph,
  title={NavMorph: A Self-Evolving World Model for Vision-and-Language Navigation in Continuous Environments},
  author={Yao, Xuan and Gao, Junyu and Xu, Changsheng},
  booktitle=ICCV,
  year={2025}
}

@inproceedings{zhao2025drivedreamer,
  title={Drivedreamer-2: Llm-enhanced world models for diverse driving video generation},
  author={Zhao, Guosheng and Wang, Xiaofeng and Zhu, Zheng and Chen, Xinze and Huang, Guan and Bao, Xiaoyi and Wang, Xingang},
  booktitle=AAAI,
  volume={39},
  number={10},
  pages={10412--10420},
  year={2025}
}

@inproceedings{williams2016aggressive,
  title={Aggressive driving with model predictive path integral control},
  author={Williams, Grady and Drews, Paul and Goldfain, Brian and Rehg, James M and Theodorou, Evangelos A},
  booktitle={2016 IEEE international conference on robotics and automation (ICRA)},
  pages={1433--1440},
  year={2016},
  organization={IEEE}
}

@article{simeoni2025dinov3,
  title={Dinov3},
  author={Sim{\'e}oni, Oriane and Vo, Huy V and Seitzer, Maximilian and Baldassarre, Federico and Oquab, Maxime and Jose, Cijo and Khalidov, Vasil and Szafraniec, Marc and Yi, Seungeun and Ramamonjisoa, Micha{\"e}l and others},
  journal={arXiv preprint arXiv:2508.10104},
  year={2025}
}

@inproceedings{du2023learning,
  title={Learning universal policies via text-guided video generation},
  author={Du, Yilun and Yang, Sherry and Dai, Bo and Dai, Hanjun and Nachum, Ofir and Tenenbaum, Josh and Schuurmans, Dale and Abbeel, Pieter},
  booktitle=NIPS,
  volume={36},
  pages={9156--9172},
  year={2023}
}

@inproceedings{parmar2022aliased,
  title={On aliased resizing and surprising subtleties in gan evaluation},
  author={Parmar, Gaurav and Zhang, Richard and Zhu, Jun-Yan},
  booktitle=CVPR,
  pages={11410--11420},
  year={2022}
}

@article{dasari2019robonet,
  title={Robonet: Large-scale multi-robot learning},
  author={Dasari, Sudeep and Ebert, Frederik and Tian, Stephen and Nair, Suraj and Bucher, Bernadette and Schmeckpeper, Karl and Singh, Siddharth and Levine, Sergey and Finn, Chelsea},
  journal={Annual Conference on Robot Learning (CoRL)},
  year={2019}
}

@inproceedings{micheli2024efficient,
  title={Efficient world models with context-aware tokenization},
  author={Micheli, Vincent and Alonso, Eloi and Fleuret, Fran{\c{c}}ois},
  booktitle=ICML,
  year={2024}
}

@inproceedings{scannell2025discrete,
  title={Discrete codebook world models for continuous control},
  author={Scannell, Aidan and Nakhaei, Mohammadreza and Kujanp{\"a}{\"a}, Kalle and Zhao, Yi and Luck, Kevin Sebastian and Solin, Arno and Pajarinen, Joni},
  booktitle=ICLR,
  year={2025}
}
